%% file: main.tex
\documentclass{article}
\usepackage[preprint,nonanonymous]{neurips_2026}
\makeatletter
\renewcommand{\@noticestring}{}
\makeatother
\usepackage[utf8]{inputenc}
\usepackage[T1]{fontenc}
\usepackage{microtype}
\usepackage{amsmath,amssymb,amsthm}
\usepackage{booktabs}
\usepackage{graphicx}
\usepackage{placeins}
\usepackage{multirow}
\usepackage{enumitem}
\usepackage{xcolor}
\usepackage[colorlinks=true,allcolors=blue]{hyperref}

\title{Learning Less Is More: Premature Upper-Layer Attention Specialization Hurts Language Model Pretraining}

\author{
Jinchang Zhu$^{1,a}$, Jindong Li$^{1}$, Yuwen Hao$^{1}$, Chengyu Zou, Rong Fu, Menglin Yang$^{1,\dagger,b}$ \\
$^{1}$The Hong Kong University of Science and Technology (Guangzhou) \\
$^{a}$jzhu997@connect.hkust-gz.edu.cn \quad $^{b}$menglinyang@hkust-gz.edu.cn \\
$^{\dagger}$Corresponding author.
}

\newtheorem{theorem}{Theorem}

\newtheorem{corollary}{Corollary}
\newcommand{\qk}{Q/K}

\newcommand{\probe}{upper-\qk{} learning-rate intervention}

\newcommand{\od}{\odot}

\definecolor{myblue_1}{RGB}{70, 90, 110}
\definecolor{myblue_2}{RGB}{242, 248, 253}
\definecolor{darkblue}{RGB}{0,45,110}

\usepackage{hyperref}
\hypersetup{
    colorlinks=true,
    linkcolor=blue,
    filecolor=magenta,      
    urlcolor=blue,
    citecolor=darkblue
}       % 显示颜色，为了方便编辑和检查。最后交稿时注释掉。(可以带颜色)

\usepackage[most]{tcolorbox}
\newtcolorbox{theorembox}{
  enhanced,
  breakable,
  colback=myblue_2,
  colframe=myblue_1,
  boxrule=0.8pt,
  arc=2mm,
  left=1.2mm,
  right=1.2mm,
  top=1mm,
  bottom=1mm
}

\begin{document}
\maketitle

\input{sections/00_abstract}

\input{sections/01_introduction}

\input{sections/03_premature_upper_attention_specialization}
\input{sections/04_architecture_gated_ffns}
\input{sections/05_experimental_setup}
\input{sections/06_results}
\input{sections/07_theory_summary}
\input{sections/08_discussion}
\input{sections/09_limitations}

\input{sections/10_conclusion}

\clearpage

\bibliographystyle{plainnat}
\bibliography{references}

\clearpage

\appendix

\input{sections/02_related_work}
\input{sections/07_theory}
\input{sections/a1_direct_ffn_pathway_evidence}
\input{sections/a2_entropy_floor_control}
\input{sections/a3_global_learning_rate_control}
\input{sections/a3b_release_rule_robustness}
\input{sections/a4_alpha_sweep}
\input{sections/a5_scale_check}
\input{sections/a6_locality_diagnostic}
\input{sections/a7_reproducibility_compute}
\input{sections/a8_LLM-Usage-Statement}
\end{document}

%% file: sections/00_abstract.tex
\begin{abstract}

A causal-decoder block is hierarchical: lower layers build the residual basis that upper layers attend over. We identify a failure mode of this hierarchy in GPT-style pretraining: upper layers can commit to sharp, causally relied-upon attention patterns before the lower-layer features they attend to have stabilized. We call this \textit{premature upper-layer attention specialization}, and our central claim is that this failure is real, measurable, and explains why one widely adopted architectural choice, the multiplicative gated feed-forward network, helps pretraining beyond expressivity arguments.
{First}, to establish the failure, we run a minimal controlled optimization intervention. Temporarily slowing only the query and key projections in upper layers during early training improves a 270M GPT-style decoder's final perplexity, token efficiency, and downstream accuracy, without altering any other parameter group. Mechanistic probes confirm that the intervention does not delay lower-layer routing; rather, it specifically prevents upper attention from collapsing onto an immature residual basis.
{Second}, we ask why the same intervention is nearly unnecessary in LLaMA-style blocks. Through matched ablations, we isolate multiplicative gated feed-forward networks, rather than RMSNorm or bias removal, as the component that suppresses the upstream residual writes driving the failure.
{Third}, we provide a pathwise analysis of the decoder block that unifies these two findings under a single bound. The learning-rate intervention reduces a step-size factor on this bound, while gated FFNs reduce a residual-energy factor on the same growth pathway.
Our results identify upper-layer Q/K timing as one concrete point where decoder architecture and optimization interact. Following this view, future LLM design can be guided not only by what each block computes, but by when in training it should be allowed to commit.

\end{abstract}

%% file: sections/01_introduction.tex
\section{Introduction}

Transformer pretraining is usually described through scale, data, and optimizer schedules \citep{vaswani2017attention,brown2020language,kaplan2020scaling,hoffmann2022training}. Yet the block architecture itself can determine which computations become confident early. A decoder stack is hierarchical: lower layers construct a residual basis, and upper layers use attention to compare, retrieve, and recombine information in that basis. Upper-layer query/key projections therefore have a special role. They do not merely transform features; they decide which past positions the upper network will treat as relevant. This makes upper-layer Q/K timing an architecture--optimization interface: architecture shapes the residual directions available to attention, while optimization determines when upper attention becomes confident on them.

We study a failure mode in GPT-style decoder blocks that we call \textit{\textbf{premature upper-layer attention specialization}}: upper-layer attention becomes sharp and causally relied upon before lower-layer routing and copy-like features have stabilized. Early in pretraining, lower layers are still building routing and copy-like features associated with induction behavior \citep{olsson2022context,edelman2024evolution}. If upper-layer \qk{} logits grow too quickly, upper attention can become low-entropy on a moving, immature representation. This sharp matching is not harmless. Once a row of attention assigns most probability mass to one key, the softmax gradient available to raise alternative keys is proportional to their small probabilities. A wrong or noisy early match can therefore become an optimization attractor rather than a transient mistake.

\begin{figure*}[t]
\centering
\includegraphics[width=\textwidth]{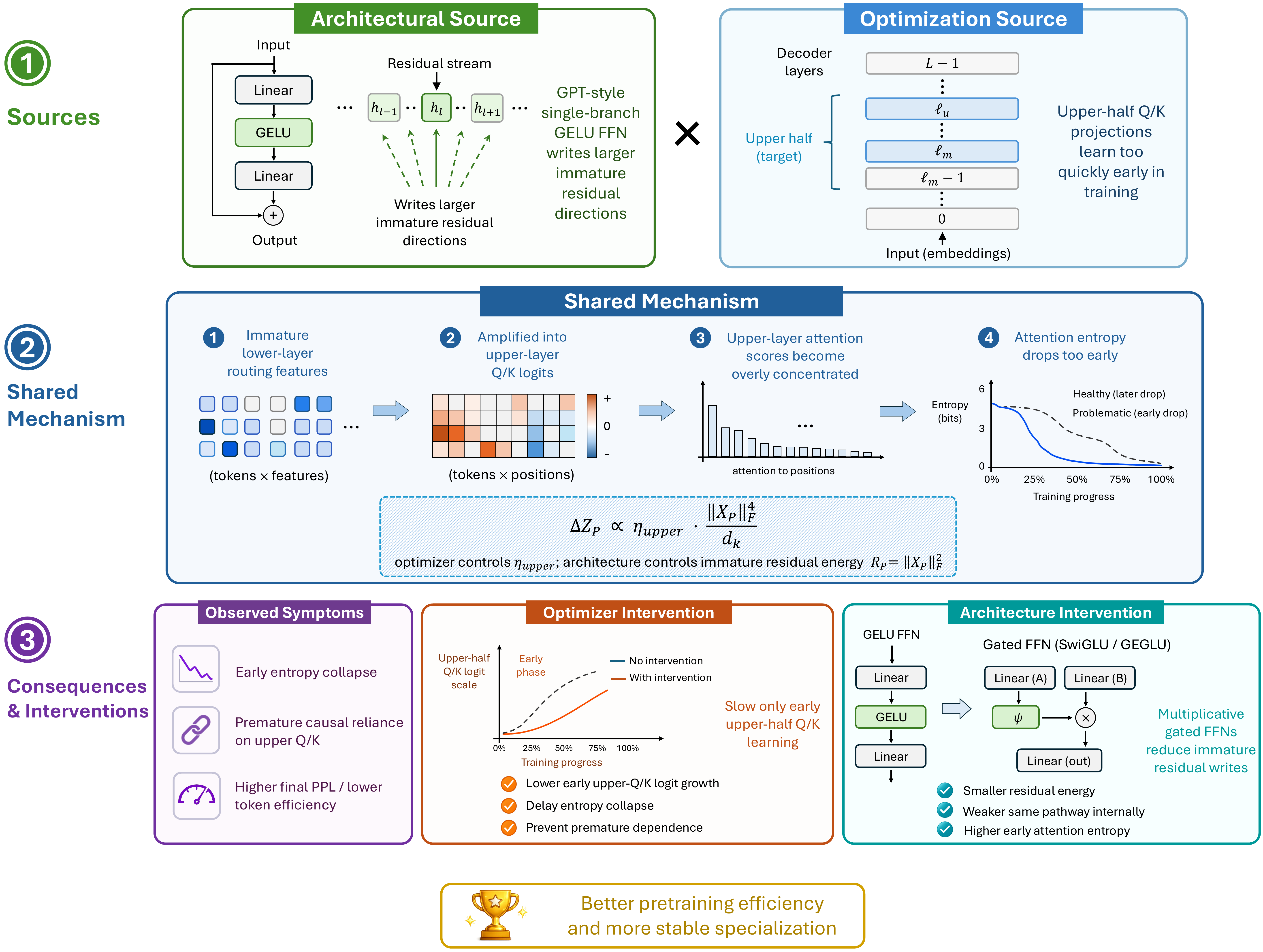}
\caption{Mechanism overview. Premature upper-layer attention specialization arises from a shared upper-\qk{} logit-growth pathway. Single-branch FFNs increase the immature residual-energy side of the pathway, while full-speed early upper-\qk{} optimization increases the step-size side. Their interaction produces early upper-logit concentration, entropy collapse, and premature causal reliance on upper \qk{}. The two interventions act on the same pathway from different sides: selective upper-\qk{} slowing reduces the optimization factor, and multiplicative gated FFNs reduce the upstream residual-write factor.}
\label{fig:mechanism-overview}
\end{figure*}

The first empirical anchor is a \textbf{selective learning intervention}. In a GPT-style 270M-parameter decoder, reducing the early learning rate only for upper-half \qk{} substantially improves final validation loss, perplexity, token efficiency, and the average score on a downstream evaluation suite. The intervention leaves lower \qk{}, values, FFNs, embeddings, and all other parameter groups on the normal optimizer schedule. Its target is therefore narrow: \textit{prevent upper attention from becoming confident before the lower residual basis has matured.}

The \textbf{mechanism evidence} shows that the gain is not a generic small-learning-rate effect. The intervention does not delay lower routing maturity. Instead, it delays upper entropy collapse, suppresses early upper \qk{} logit growth, reduces early concentration of the upper-\qk{} bilinear form, and dramatically reduces early causal dependence on upper \qk{}. The paper's central mechanism is:
\textit{\textbf{Premature upper-layer attention specialization arises when immature residual directions drive upper \qk{} logits before the lower residual basis has stabilized. A selective upper-\qk{} learning intervention suppresses the step-size side of this pathway, while multiplicative gated FFNs suppress the immature-residual-energy side.}}

Figure~\ref{fig:mechanism-overview} summarizes the mechanism studied in this paper: immature residual directions and fast early upper-\qk{} learning jointly drive premature upper attention specialization, while selective upper-\qk{} slowing and multiplicative gated FFNs suppress the same pathway from different sides.

We then ask why the intervention is much less useful in matched LLaMA-style blocks. Component ablations identify the largest suppressor: the FFN changes from a single GELU branch to a multiplicative gated branch. Matched SwiGLU and GEGLU FFNs sharply lower the marginal value of the \probe{}, separating the mechanism from parameter count and the specific SiLU activation. Thus the gated-FFN result is an architectural explanation for the upper-\qk{} intervention result.

Our contributions are summarized as follows:
% \begin{enumerate}[leftmargin=1.35em,itemsep=2pt]
\begin{itemize}
    \item We identify premature upper-layer attention specialization as a failure mode in GPT-style decoder pretraining and establish it with a targeted GPT-style intervention. The failure is localized to upper \qk{} timing rather than to attention everywhere: in the 270M GPT-style decoder, reducing only early upper-half \qk{} learning improves three-run average perplexity by 0.50 and saves 13.17\% of the training tokens needed to reach the matched baseline final loss.

    \item We provide \textbf{mechanism evidence} that the intervention leaves lower routing maturity intact while delaying upper entropy collapse, suppressing upper-logit growth, reducing concentration of the upper-\qk{} bilinear form, and reducing early upper-\qk{} causal dependence.

    \item We show why the phenomenon is architecture-dependent. RMSNorm alone and bias removal alone preserve most of the intervention gain, while matched SwiGLU and GEGLU FFNs largely eliminate it. Direct pathway measurements show that gated FFNs reduce the early FFN residual-write amplitude feeding upper attention.

    \item We prove a \textbf{decoder-block theorem} that unifies the intervention and the architecture result. The intervention reduces the upper-\qk{} step-size factor directly. Gated FFNs reduce the immature residual-energy factor entering the same bound. Under a measurable immature-channel locality condition, this yields the localized growth term $O(\eta_{\mathrm{upper}}\|X_P\|_F^4/d_k)$.
% \end{enumerate}
\end{itemize}

%% file: sections/03_premature_upper_attention_specialization.tex
\section{Premature Upper Attention Specialization}
\label{sec:premature-specialization}

For an upper decoder layer, let $X\in\mathbb{R}^{n\times d}$ be the layer-normalized hidden matrix for a sequence of length $n$. For one head,
\begin{equation}
Q=XW_Q,\qquad K=XW_K,\qquad Z=\frac{QK^\top}{\sqrt{d_k}}
  =\frac{XW_QW_K^\top X^\top}{\sqrt{d_k}},
\end{equation}
where $Z$ is the pre-softmax attention-logit matrix. Define $B=W_QW_K^\top$. Upper \qk{} specialization is the growth of a structured bilinear form $XBX^\top$ that makes upper attention selective over positions.

Specialization is useful when the residual basis $X$ already contains stable routing features. It is harmful when a component of $X$ is immature. We refer to these moving components as immature residual directions. Formally, let $P$ be an orthogonal projector onto residual directions whose lower-layer routing features have not yet stabilized, and let $X_P=XP$. If upper \qk{} learns quickly on $X_P$, the model can form confident attention matches on features that later move or disappear. We measure this event with upper attention logit magnitude, upper attention entropy, upper/lower logit ratio, and causal dependence on upper \qk{}.

We use one selective intervention to expose and correct the failure. During the early window, the intervention multiplies the learning rate of upper-half $W_Q$ and $W_K$ by $0.25$ while leaving lower \qk{}, values, FFNs, embeddings, and all other optimizer behavior unchanged. The release rule is fixed across experiments: the lower-copy score, defined as the lower-half attention mass assigned to the nearest previous occurrence of the same token and averaged over layers, heads, and valid repeated-token positions, must be at least $0.005$ for three consecutive evaluations. The earliest release is at 3\% of training; if the condition has not fired by 12\%, release is forced. After release, the multiplier ramps from $0.25$ to $1.0$ over 1\% of total steps. Appendix~\ref{app:release-rule-robustness} reports release-rule robustness checks for this maturity-based criterion. This asks a causal question with a practical payoff: does reducing only early upper \qk{} learning prevent the model from taking a worse optimization path?

%% file: sections/04_architecture_gated_ffns.tex
\section{Architecture: Why Gated FFNs Matter}

The single-branch GPT-style FFN is
\begin{equation}
F_{\mathrm{single}}(x)=W_{\mathrm{out}}\phi(W_{\mathrm{in}}x),
\end{equation}
where $\phi$ is GELU in our GPT-style baseline. The multiplicative gated FFN used by SwiGLU and GEGLU is
\begin{equation}
F_{\mathrm{gate}}(x)=W_{\mathrm{out}}\left[\psi(W_{\mathrm{gate}}x)\od W_{\mathrm{up}}x\right].
\end{equation}
The difference is structural. A single branch can pass an immature feature through one projection and one nonlinearity. A gated FFN requires two projected signals to co-occur multiplicatively. Before a feature direction has stable alignment in both branches, the product attenuates it. This creates an architectural delay: unstable residual directions are less able to drive the next upper attention block into sharp matching.

To separate multiplicative gating from the particular gate activation, we use GEGLU with the same width and the same three-matrix gated structure as SwiGLU. GEGLU differs mainly in the gate activation and suppresses the intervention gain at least as strongly as SwiGLU. The mechanism is the multiplicative gate, not the particular activation function.

%% file: sections/05_experimental_setup.tex
\section{Experimental Setup}

\paragraph{270M decoder pretraining.}
The main experiments use a 20-layer causal decoder with width 960, 15 attention heads, sequence length 1024, RoPE position encoding, tied input/output embeddings, and roughly 270M parameters. All pretraining runs use 2.5B FineWeb-Edu tokens \citep{penedo2024fineweb} and the same GPT-2 tokenizer, packed-token data pipeline, AdamW optimizer family, cosine schedule, batch geometry, and evaluation protocol. The GPT-style baseline uses LayerNorm, biased linear projections, and a GELU FFN of hidden width 3840.

\paragraph{Architectural variants.}
We evaluate a full LLaMA-style variant at the same depth, width, head count, sequence length, and token budget. It uses RMSNorm \citep{zhang2019rmsnorm}, biasless linear projections, and a SwiGLU FFN with hidden width 2560, matching the FFN parameter/FLOP budget of the GELU width-3840 block. We then isolate components with same-size paired runs: GPT+RMSNorm, GPT+biasless projections, GPT+matched SwiGLU FFN, LLaMA-style+LayerNorm, and matched GEGLU FFN.

\paragraph{Intervention comparisons.}
For each architecture, we compare a matched control with the \probe{}. The intervention benefit is reported as PPL reduction, defined as matched-control perplexity minus intervention perplexity. Larger positive values mean larger benefit from reducing early upper-\qk{} learning.

\paragraph{Mechanism measurements.}
We record upper attention logit magnitude, upper attention entropy, upper/lower logit ratio, lower copy-routing maturity, and causal sensitivity to zeroing upper \qk{}. The main causal intervention zeros upper-half $Q$, $K$, or both at evaluation time. In dot-product attention these interventions make upper attention logits degenerate in the same way, and therefore give the same upper-wide causal readout. For the gated-FFN attribution, we also measure the FFN residual write added by each block, because this is the residual-energy input to the upper-\qk{} logit-growth bound.

\begin{figure}[t]
\centering
\includegraphics[width=0.86\linewidth]{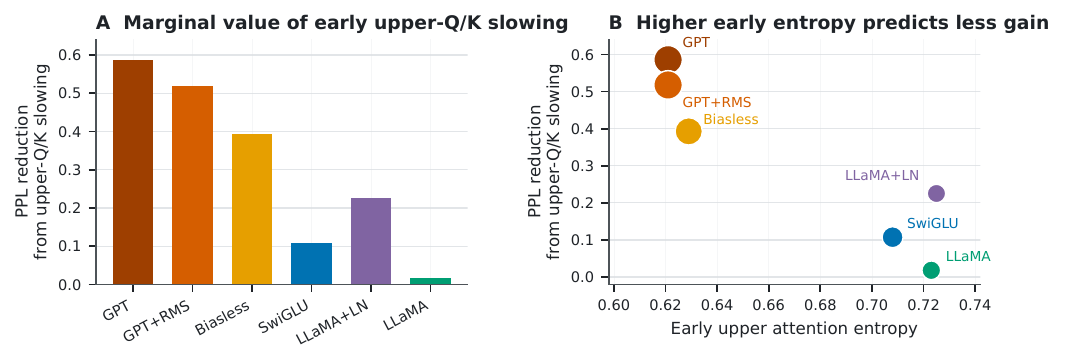}
\caption{Architecture attribution for premature upper attention specialization. Left: the marginal value of the \probe{} is large in GPT-style blocks and small in gated-FFN blocks. Right: architectures with higher early upper attention entropy have less to gain from externally reducing early upper-\qk{} learning.}
\label{fig:architecture-attribution}
\end{figure}
\input{tables/main_270m_pretraining}

%% file: tables/main_270m_pretraining.tex
\begin{table}[t]
\centering
\scriptsize
\setlength{\tabcolsep}{3pt}
\caption{GPT-style 270M pretraining. The intervention slows only upper-half \qk{} in the early window. Results are mean $\pm$ sample standard deviation across three full training runs; differences use paired seeds.}
\label{tab:main-b}
\begin{tabular}{@{}lcccc@{}}
\toprule
Model & Loss & PPL & Tokens to baseline final loss & Token saving \\
\midrule
GPT-style control & $3.2883 \pm 0.0021$ & $26.7981 \pm 0.0560$ & 2.500B & -- \\
Intervention & $3.2696 \pm 0.0051$ & $26.3014 \pm 0.1344$ & $2.171 \pm 0.035$B & $329.4$M / $13.17 \pm 1.41\%$ \\
\midrule
Difference & $-0.0187 \pm 0.0030$ & $-0.4967 \pm 0.0789$ & $-0.329 \pm 0.035$B & -- \\
\bottomrule
\end{tabular}
\end{table}

%% file: sections/06_results.tex
\section{Results}

\subsection{A GPT-Style Block Is Highly Sensitive to Early Upper \qk{} Learning}

Table~\ref{tab:main-b} shows the three-run 270M GPT-style result. The \probe{} improves average final validation loss by 0.0187 and average perplexity by 0.4967. It also reaches the matched baseline final loss after 2.171B tokens on average, saving 329.4M tokens out of the 2.5B-token budget.

The same paired intervention also improves a larger 0.7B GPT-style decoder trained for 7.0B tokens (three seeds), improving final perplexity by 0.13 and saving 0.54B same-loss tokens on average; see Appendix~\ref{app:scale-check}.

As an additional control, halving the global learning rate worsens final perplexity to 31.42, confirming that the gain is not generic slower training but selective early upper-\qk{} slowing; see Appendix~\ref{app:global-lr-control}.

The release criterion is also stable: offline replay of nearby lower-copy thresholds and patience values releases in the same early 3--6\% window, and fixed-release controls at 3\% and 6\% preserve most of the perplexity and token-efficiency gain; see Appendix~\ref{app:release-rule-robustness}.

\subsection{Downstream Benchmark Evaluation}

Table~\ref{tab:downstream-upperlr} reports the downstream evaluation for the same 270M GPT-style intervention comparison. The \probe{} improves the three-run average score by 0.41 percentage points and wins 14 of 21 run-task comparisons. The gains are largest on LAMBADA, RACE, and ARC-Easy, while ARC-Challenge is nearly flat. Together with the validation and token-efficiency results, this shows that reducing premature upper-attention specialization improves the trained base model's general evaluation behavior.

\input{tables/downstream_upperlr}

\subsection{The Failure Is Early Causal Dependence on Upper \qk{}}

The mechanism should not be inferred from loss alone. At 3\% of training, the GPT-style control already has upper attention with high logit magnitude and low entropy. More importantly, upper \qk{} is already causally necessary. Figure~\ref{fig:mechanism}D reports the perplexity increase from zeroing upper-half \qk{} at several checkpoints. In the GPT-style control, upper \qk{} ablation increases perplexity by 82.3 at 3\% and by 146.1 at the end. Under the intervention, the same ablation increases perplexity by only 0.35 at 3\%, then grows gradually to 17.4 at the end.

This result is sharper than a statement about delayed training curves. The intervention does not merely shift the same upper-\qk{} dependence curve to the right. It prevents the model from becoming prematurely and excessively dependent on upper \qk{}. Upper \qk{} still becomes useful later, but the final model is much less brittle to upper-\qk{} ablation.

\subsection{The Intervention Delays Upper Specialization, Not Lower Routing}

Figure~\ref{fig:mechanism} separates the developmental events and connects them to the causal readout. Lower routing maturity appears at the same training progress under the control and the intervention. The difference is upper attention: in the control, upper attention reaches the sharp-entropy threshold at the same early checkpoint as lower routing maturity; under the intervention, upper sharpness is delayed to 20\% of training. The same pattern appears in the raw readouts. At 3\% of training, the intervention has much higher upper attention entropy, much lower upper logit magnitude, and much smaller top singular value of the upper-\qk{} bilinear matrix $W_Q^\top W_K$.

A diagnostic sweep over the early upper-\qk{} multiplier gives the same direction: smaller multipliers monotonically reduce early upper-logit growth and upper-\qk{} bilinear displacement; see Appendix~\ref{app:alpha-sweep}.

\subsection{The Phenomenon Is Architecture-Dependent}

The full LLaMA-style 270M block barely benefits from the \probe{}: final perplexity changes from 26.3712 to 26.3531, a gain of only 0.0182. In the matched controlled comparison used for the component sweep, the GPT-style reference gains 0.5858 perplexity. The question is which component explains the difference.

Figure~\ref{fig:architecture-attribution} summarizes the architecture-level pattern: architectures with lower early upper attention entropy obtain larger benefits from early upper-\qk{} slowing. Table~\ref{tab:components} isolates the components. RMSNorm alone does not fix the failure: GPT+RMSNorm keeps the same early upper logit and entropy as the GPT-style reference and retains a large intervention gain. Removing biases helps but still leaves a substantial gain. The largest suppressor is the FFN change. A matched SwiGLU FFN reduces the intervention gain to 0.1075 perplexity while also lowering early upper logit magnitude and raising upper entropy. LLaMA-style+LayerNorm sits between the two, showing that the full LLaMA-style behavior is an interaction, with RMSNorm helping once the gated FFN and biasless projections are present.

\input{tables/component_attribution}

\begin{figure*}[t]
\centering
\includegraphics[width=\textwidth]{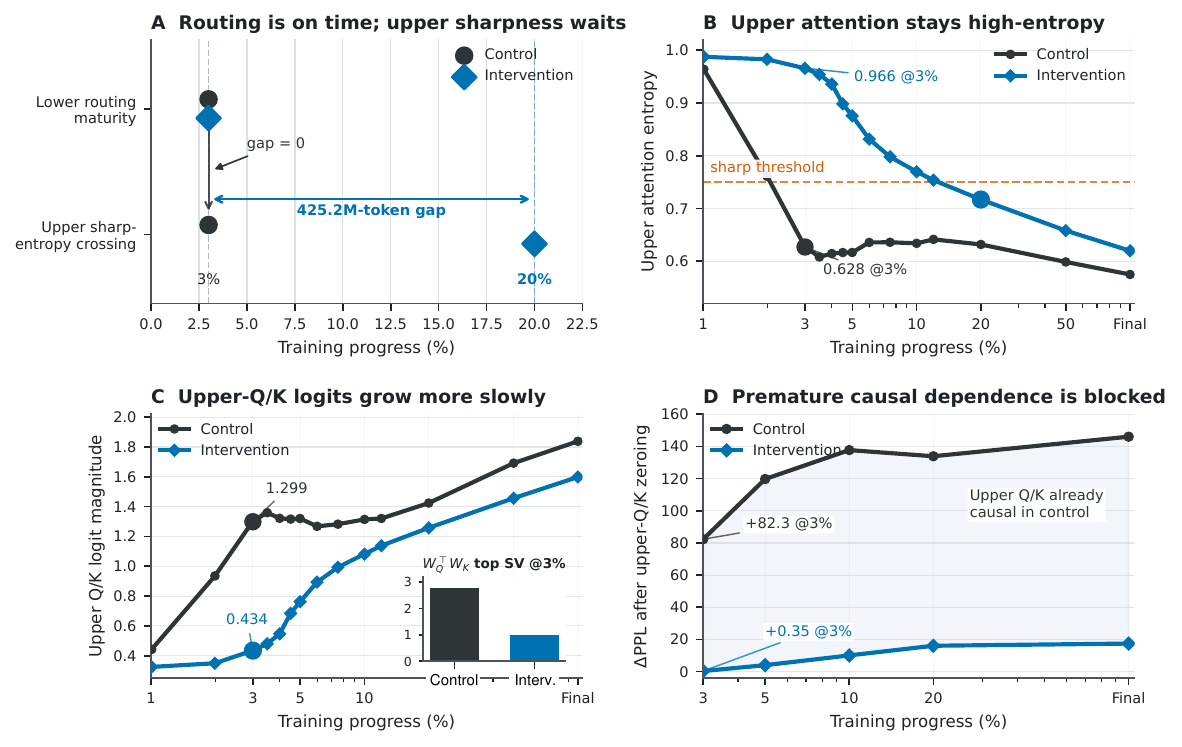}
\caption{The intervention delays upper attention specialization without delaying lower routing. Control denotes default training; Intervention denotes reducing only early upper-half \qk{} learning by a factor of 0.25 before release. (A) Lower routing matures at 3\% in both runs, while upper sharp-entropy crossing moves from 3\% in the control to 20\% under the intervention. (B) Upper attention entropy stays high longer under the intervention. (C) Upper-\qk{} logit growth is suppressed, with the 3\% upper $W_Q^\top W_K$ top singular value shown as an inset. (D) Zeroing upper-half \qk{} reveals that the control becomes prematurely dependent on upper \qk{}, whereas the intervention sharply reduces that dependence.}
\label{fig:mechanism}
\end{figure*}

\subsection{The Suppressor Is the Multiplicative Gate}

To isolate the multiplicative structure itself, we compare SwiGLU and GEGLU with the same gated FFN width, the same three linear maps, and the same parameter count. Table~\ref{tab:gated} shows that both gated FFNs sharply reduce the marginal value of the \probe{}. The GEGLU control has final perplexity 26.3627, and adding the intervention improves it by only 0.0549. Its early upper attention is also less sharp than the GPT-style control: upper logit 0.983 and entropy 0.729 at 3\%.

\input{tables/gated_ffn_attribution}

Direct pathway measurements close the architecture attribution. Table~\ref{tab:ffn-pathway} measures the FFN residual write at the same 3\% checkpoint used for the attention diagnostics. Matched SwiGLU and GEGLU reduce upper-layer FFN residual-write RMS by 55.0\% and 52.4\% relative to the GELU-3840 GPT-style FFN, and by 42.2\% and 38.9\% relative to a width-matched GELU-2560 capacity control. The same checkpoints have lower upper \qk{} logit magnitude and higher upper attention entropy. This identifies the gated FFN effect with the residual-energy term in Section~\ref{sec:theory}: gated FFNs reduce the residual input that can drive premature upper-\qk{} logit growth.

\input{tables/ffn_pathway_evidence}

Appendix~\ref{app:entropy-floor-control} adds a mediator control: directly imposing an upper-attention entropy floor raises early entropy and gives a small perplexity improvement, but it does not recover the much larger gain from changing the upper-\qk{} timing path.

%% file: tables/downstream_upperlr.tex
\begin{table}[t]
\centering
\scriptsize
\caption{Downstream benchmark evaluation for the GPT-style intervention comparison. Results are mean $\pm$ sample standard deviation over three runs on the pre-specified seven-task suite. Delta is the paired intervention-minus-control difference in percentage points; the mean-row deviation is computed over seed-level suite averages.}
\label{tab:downstream-upperlr}
\begin{tabular}{lccc}
\toprule
Task & Control & Intervention & Delta pp \\
\midrule
PIQA & $0.6065 \pm 0.0056$ & $0.6103 \pm 0.0013$ & $+0.381 \pm 0.544$ \\
HellaSwag & $0.3011 \pm 0.0013$ & $0.3024 \pm 0.0020$ & $+0.133 \pm 0.314$ \\
ARC-Easy & $0.4315 \pm 0.0125$ & $0.4373 \pm 0.0034$ & $+0.575 \pm 1.582$ \\
ARC-Challenge & $0.2466 \pm 0.0023$ & $0.2460 \pm 0.0125$ & $-0.057 \pm 1.258$ \\
RACE & $0.2695 \pm 0.0049$ & $0.2778 \pm 0.0056$ & $+0.829 \pm 1.023$ \\
LAMBADA & $0.1974 \pm 0.0045$ & $0.2073 \pm 0.0004$ & $+0.996 \pm 0.485$ \\
MMLU & $0.2291 \pm 0.0009$ & $0.2294 \pm 0.0001$ & $+0.028 \pm 0.082$ \\
\midrule
Mean & -- & -- & $+0.412 \pm 0.143$ \\
\bottomrule
\end{tabular}
\end{table}

%% file: tables/component_attribution.tex
\begin{table}[t]
\centering
\footnotesize
\caption{Same-size component attribution in the 270M decoder, using one matched controlled comparison for each architecture pair. PPL reduction is matched-control perplexity minus intervention perplexity, so larger values mean larger benefit from reducing early upper-\qk{} learning. Early readouts are measured at about 3\% of training.}
\label{tab:components}
\begin{tabular}{lrrr}
\toprule
Architecture & PPL reduction & Logit @3\% & Entropy @3\% \\
\midrule
GPT-style reference & 0.5858 & 1.308 & 0.621 \\
GPT + RMSNorm & 0.5179 & 1.308 & 0.621 \\
GPT + biasless projections & 0.3928 & 1.263 & 0.629 \\
GPT + matched SwiGLU FFN & 0.1075 & 1.044 & 0.708 \\
LLaMA-style + LayerNorm & 0.2253 & 0.972 & 0.725 \\
LLaMA-style reference & 0.0182 & 0.979 & 0.723 \\
\bottomrule
\end{tabular}
\end{table}

%% file: tables/gated_ffn_attribution.tex
\begin{table}[t]
\centering
\small
\setlength{\tabcolsep}{4pt}
\caption{Clean gated-FFN attribution. SwiGLU and GEGLU use matched gated width and the same parameter count. Both suppress the PPL reduction from the intervention, separating the mechanism from the SiLU activation.}
\label{tab:gated}
\begin{tabular}{lrrrr}
\toprule
Architecture & Control ppl & Intervention ppl & PPL reduction & Upper entropy @3\% \\
\midrule
GPT-style GELU FFN & 26.7382 & 26.1524 & 0.5858 & 0.621 \\
SwiGLU gated FFN & 26.2083 & 26.1008 & 0.1075 & 0.708 \\
GEGLU gated FFN & 26.3627 & 26.3078 & 0.0549 & 0.729 \\
\bottomrule
\end{tabular}
\end{table}

%% file: tables/ffn_pathway_evidence.tex
\begin{table}[ht]
\centering
\caption{Direct FFN pathway evidence at 3\% training. Gated FFNs reduce the early FFN residual-write amplitude feeding upper attention while simultaneously reducing upper \qk{} logit magnitude and increasing upper attention entropy.}
\label{tab:ffn-pathway}
\small
\begin{tabular}{lrrrr}
\toprule
Architecture & Upper FFN write RMS & Upper logit & Upper entropy & First-upper write RMS \\
\midrule
GELU-3840 & 0.1205 & 1.3128 & 0.6277 & 0.1262 \\
GELU-2560 & 0.0938 & 1.1496 & 0.6750 & 0.0960 \\
SwiGLU-2560 & 0.0542 & 0.9491 & 0.7423 & 0.0469 \\
GEGLU-2560 & 0.0573 & 0.9994 & 0.7226 & 0.0505 \\
\bottomrule
\end{tabular}
\end{table}

%% file: sections/07_theory_summary.tex
\section{Theory}
\label{sec:theory}

We give a decoder-block certificate for the pathway in Figure~\ref{fig:mechanism-overview}. Let $X\in\mathbb{R}^{n\times d}$ be the normalized residual stream entering an upper attention head, $B=W_QW_K^\top$, and $Z=XBX^\top/\sqrt{d_k}$. Let $P=P^\top=P^2$ project onto immature residual directions and write $X_P=XP$. The immature logit component is
\begin{equation}
Z_P=\frac{X_P PBP X_P^\top}{\sqrt{d_k}} .
\end{equation}

\begin{theorem}[Pathwise premature-logit growth]
\label{thm:pathwise}
For one effective gradient step on $W_Q,W_K$ with upper step sizes $\eta_Q,\eta_K$, the first-order immature-logit update obeys a deterministic pathwise bound
\begin{equation}
\begin{aligned}
\|\Delta Z_P\|_F
\le
\frac{\|X_P\|_{\mathrm{op}}^2\|X_P\|_F\|E\|_{\mathrm{op}}}{d_k}
\Big(
\eta_Q\|XW_KW_K^\top P\|_F+
\eta_K\|XW_QW_Q^\top P\|_F
\Big)
+O(\eta_Q\eta_K),
\end{aligned}
\end{equation}
where $E=\partial\mathcal{L}/\partial Z$ is the masked-softmax adjoint. Without extra assumptions this gives an unconditional bound controlled by the total residual scale. Under the measurable immature-channel locality condition
\begin{equation}
\|XW_SW_S^\top P\|_F
\le
\lambda_S\|X_P\|_F\|W_S\|_{\mathrm{op}}^2,\qquad S\in\{Q,K\},
\end{equation}
it sharpens to
\begin{equation}
\|\Delta Z_P\|_F
=
O\!\left(
\eta_{\mathrm{upper}}\frac{\|X_P\|_F^4}{d_k}
\right).
\end{equation}
\end{theorem}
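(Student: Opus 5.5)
The plan is to compute the first-order variation of $Z_P$ under the gradient step explicitly, then bound each term with submultiplicativity of Frobenius and operator norms.

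\textbf{Gradients.} By the chain rule through $Z=XW_QW_K^\top X^\top/\sqrt{d_k}$,
\[
G_Q=\frac{\partial\mathcal{L}}{\partial W_Q}=\frac{X^\top E X W_K}{\sqrt{d_k}},\qquad
G_K=\frac{\partial\mathcal{L}}{\partial W_K}=\frac{X^\top E^\top X W_Q}{\sqrt{d_k}}.
\]
The step $W_Q\mapsto W_Q-\eta_Q G_Q$, $W_K\mapsto W_K-\eta_K G_K$ changes $B=W_QW_K^\top$ by
\[
\Delta B=-\eta_Q G_QW_K^\top-\eta_K W_QG_K^\top+\eta_Q\eta_K G_QG_K^\top .
\]
The cross term is exactly the $O(\eta_Q\eta_K)$ remainder. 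For the immature component, $\Delta Z_P=X_PP\,\Delta B\,PX_P^\top/\sqrt{d_k}$. Since $PX^\top=(XP)^\top=X_P^\top$, the first-order $Q$-term equals
\[
-\frac{\eta_Q}{d_k}\,X_P\,X_P^\top E\,X\,W_KW_K^\top P\,X_P^\top .
\]
By the symmetric computation, the $K$-term is
\[
-\frac{\eta_K}{d_k}\,X_P\,W_QW_Q^\top X^\top E\,X_P\,X_P^\top ,
\]
which equals $-\frac{\eta_K}{d_k}X_P(XW_QW_Q^\top P)^\top E\,X_PX_P^\top$ after using $X_P=X_PP$ on the left. (Here $E$ and $E^\top$ have the same operator norm.)

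\textbf{Norm bounds.} For each term, keep one factor in Frobenius norm and put the rest in operator norm, using $\|ABC\|_F\le\|A\|_{\mathrm{op}}\|B\|_F\|C\|_{\mathrm{op}}$. In the $Q$-term, write it as $X_PX_P^\top E\cdot(XW_KW_K^\top P)\cdot X_P^\top$. The middle factor is taken in Frobenius norm. The outer factors contribute $\|X_P\|_{\mathrm{op}}^2\|E\|_{\mathrm{op}}$ on the left and $\|X_P\|_{\mathrm{op}}$ on the right.

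Doing this directly gives $\|X_P\|_{\mathrm{op}}^3$. The stated form is $\|X_P\|_{\mathrm{op}}^2\|X_P\|_F$, which is weaker and hence valid, since $\|X_P\|_{\mathrm{op}}\le\|X_P\|_F$. So I would relax one operator factor to Frobenius to match the statement. The $K$-term is handled the same way, giving $\|XW_QW_Q^\top P\|_F$.

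Summing the two terms with the triangle inequality yields the displayed pathwise bound. It is unconditional: one may further bound $\|XW_SW_S^\top P\|_F\le\|X\|_F\|W_S\|_{\mathrm{op}}^2$, which is the "total residual scale" version.

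\textbf{Locality sharpening.} Substitute the locality condition $\|XW_SW_S^\top P\|_F\le\lambda_S\|X_P\|_F\|W_S\|_{\mathrm{op}}^2$. This gives
\[
\|\Delta Z_P\|_F\le\frac{\|X_P\|_{\mathrm{op}}^2\|X_P\|_F^2\|E\|_{\mathrm{op}}}{d_k}\Big(\eta_Q\lambda_K\|W_K\|_{\mathrm{op}}^2+\eta_K\lambda_Q\|W_Q\|_{\mathrm{op}}^2\Big).
\]
Then use $\|X_P\|_{\mathrm{op}}\le\|X_P\|_F$ and set $\eta_{\mathrm{upper}}=\max(\eta_Q,\eta_K)$. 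Treat $\lambda_S$, $\|W_S\|_{\mathrm{op}}$, and $\|E\|_{\mathrm{op}}$ as $O(1)$ constants. The adjoint is bounded because a masked softmax cross-entropy adjoint has rows of bounded $\ell_1$ norm, so $\|E\|_{\mathrm{op}}\le\sqrt{\|E\|_1\|E\|_\infty}$ is controlled. The result is $O(\eta_{\mathrm{upper}}\|X_P\|_F^4/d_k)$.

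The main obstacle is the bookkeeping of which factors carry $P$. Writing $P\,\Delta B\,P$ and pushing each $P$ through the transposes correctly is what turns the gradient factors $X^\top$ into $X_P^\top$ and isolates the term $XW_SW_S^\top P$. I also need to state explicitly what the $O(\cdot)$ hides, namely the weight norms, the $\lambda_S$, and the adjoint bound.
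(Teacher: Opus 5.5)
Your proposal is correct and follows the paper's proof essentially step for step. It uses the same expansion of $B^+-B$, the same identity $PX^\top=X_P^\top$ to isolate $XW_SW_S^\top P$, the same Frobenius/operator submultiplicative bounds, and the same substitution of the locality condition followed by $\|X_P\|_{\mathrm{op}}\le\|X_P\|_F$. The differences are minor: the paper bounds $\|P\Delta B P\|_F$ first, so $\|X_P\|_F$ appears directly rather than by relaxing your sharper $\|X_P\|_{\mathrm{op}}^3$, and it writes out an explicit bound for the $\eta_Q\eta_K$ remainder. Your aside justifying $\|E\|_{\mathrm{op}}=O(1)$ is not needed, since the paper simply absorbs $\|E\|_{\mathrm{op}}$ into the checkpoint-dependent constant. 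As stated, that aside also falls short: $E$ is not a cross-entropy adjoint, and a row-sum bound alone does not control $\|E\|_1$.
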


Appendix~\ref{app:locality-diagnostic} reports the corresponding early-checkpoint locality ratios; both $\lambda_Q$ and $\lambda_K$ remain below one at the 3\% mechanism checkpoint for the control and intervention runs.

Theorem~\ref{thm:pathwise} proves the optimizer side of the mechanism. The intervention sets $\eta_Q=\eta_K=\alpha\eta$ for early upper-half \qk{}, with $\alpha=0.25$ in the main experiments, so $\Delta Z_P^{(\alpha)}=\alpha\Delta Z_P^{(1)}+O(\eta^2)$ at a fixed checkpoint. This is the formal reason that reducing only early upper-\qk{} learning suppresses premature upper-logit growth while leaving lower layers, values, FFNs, embeddings, and the output head on their normal schedule. The dose-response experiment in Appendix~\ref{app:alpha-sweep} matches the predicted monotone suppression of early upper-logit growth.

The architecture side follows from the same bound. A single-branch FFN has $F_{\mathrm{single}}(x)=W_o\phi(W_ix)$, whereas a gated FFN has
\begin{equation}
F_{\mathrm{gate}}(x)=\widetilde W_o\left[\psi(W_gx)\odot W_ux\right].
\end{equation}
Under matched initialization, independent branches, and a residual-output projector $P_{\mathrm{out}}$, the gated output energy satisfies
\begin{equation}
\frac{
\mathbb{E}\|P_{\mathrm{out}}F_{\mathrm{gate}}(x)\|_2^2
}{
\mathbb{E}\|P_{\mathrm{out}}F_{\mathrm{single}}(x)\|_2^2
}
=
\rho_0
=
\frac{\tau_g^2}{\tau_s^2}
\frac{r}{m}
\frac{\nu\mathbb{E}\psi(G_\nu)^2}{\mathbb{E}\phi(G_\nu)^2}.
\end{equation}
For the matched 270M FFN widths, $m=3840$ and $r=2560$, giving $\rho_{\mathrm{GEGLU}}\approx0.256$ and $\rho_{\mathrm{SwiGLU}}\approx0.216$ at initialization. With the residual connection, this gives a strict contraction of the FFN-added immature residual energy whenever the single-branch FFN contributes nonzero energy to that subspace.

Let $R_P=\|X_P\|_F^2$ be the immature residual energy entering upper attention. Combining the localized upper-\qk{} bound with the gated-FFN contraction gives
\begin{equation}
\|\Delta Z_P\|_F
\le
C_t\eta_{\mathrm{upper}}R_P^2+O(\eta^2).
\end{equation}
If
$
R_P^{\mathrm{gate}}\le\bar\rho R_P^{\mathrm{single}}
$
at the same upper attention input, then
\begin{equation}
\|\Delta Z_P^{\mathrm{gate}}\|_F
\le
\bar\rho^2\|\Delta Z_P^{\mathrm{single}}\|_F+O(\eta^2).
\end{equation}
Thus the intervention and the gated FFN suppress the same pathway from different sides: the intervention reduces $\eta_{\mathrm{upper}}$, while the gated FFN reduces $R_P$ in the shared term $\eta_{\mathrm{upper}}R_P^2=\eta_{\mathrm{upper}}\|X_P\|_F^4$. Finally, low-entropy attention requires logit range: if one of $N$ causal keys receives probability at least $1-\epsilon$, then
\begin{equation}
\max_j z_j-\min_j z_j
\ge
\log\frac{(N-1)(1-\epsilon)}{\epsilon}.
\end{equation}
Suppressing immature-logit growth therefore delays premature upper attention specialization. Appendix~\ref{app:full-theory} gives the complete proofs.

%% file: sections/08_discussion.tex
\section{Discussion}

The central finding is premature upper-layer attention specialization: in GPT-style decoder pretraining, upper \qk{} can become sharp and causally necessary before the residual basis is stable enough to support that specialization. Selective early reduction of upper-\qk{} learning establishes the failure experimentally: lower routing develops on schedule, while upper attention avoids premature low-entropy specialization and excessive early causal dependence.

The architecture attribution explains when the intervention is needed. Single-branch FFNs let immature residual directions feed upper \qk{} matching more strongly. Gated FFNs regularize the same path internally: the multiplicative product injects less immature FFN energy into the residual stream before those directions can dominate upper matching. Direct pathway measurements show this contraction at the FFN-write level, and the theory proves the corresponding pathwise decoder-block result. The learning-rate intervention reduces the upper-\qk{} step-size factor; the gated FFN reduces the immature residual-energy factor entering the same bound. Under the measurable locality condition, these combine into the localized growth term $O(\eta_{\mathrm{upper}}\|X_P\|_F^4/d_k)$.

The result should not be reduced to ``use SwiGLU''. SwiGLU and GEGLU both work in the relevant sense, and they share multiplicative gating. Nor is the mechanism only a global attention-entropy story: the entropy-floor control in Appendix~\ref{app:entropy-floor-control} shows that directly raising entropy gives only a small gain, while gated FFNs identify the upstream residual-write mechanism because they act before the upper attention logits are formed.

%% file: sections/09_limitations.tex
% \section{Limitations}

%% file: sections/10_conclusion.tex
\section{Conclusion and Limitation}

Premature upper-layer attention specialization is an optimization failure in GPT-style decoder pretraining. The targeted upper-\qk{} learning-rate intervention establishes the failure and its practical cost: the baseline becomes sharply and causally dependent on upper \qk{} very early, while reducing early upper-\qk{} learning improves final perplexity, token efficiency, and downstream average score. The mechanism is delayed and weakened premature upper-\qk{} dependence, not delayed lower routing. The architecture attribution then explains why this issue is smaller in gated blocks: matched multiplicative gated FFNs, including both SwiGLU and GEGLU, contract immature residual directions before they drive upper \qk{} logits. Learning less in early upper attention is therefore not merely an optimizer recipe; it reveals a concrete timing failure in decoder pretraining and a structural way to suppress it.
\textit{\textbf{Limitation.}}
The experiments are conducted at academic pretraining scale rather than frontier-industrial scale.
\textit{Additionally,}
related work is provided in Appendix \ref{sec:Related-Work}.

%% file: sections/02_related_work.tex
\section{Related Work}
\label{sec:Related-Work}

\paragraph{Decoder pretraining and training-time analysis.}
Causal decoder pretraining is the standard setting for studying language-model scaling, from the Transformer and GPT-style decoders to GPT-3, Megatron-LM, PaLM, OPT, BLOOM, LLaMA, and LLaMA~2 \citep{vaswani2017attention,radford2019language,brown2020language,shoeybi2019megatron,chowdhery2023palm,zhang2022opt,bigscience2022bloom,touvron2023llama,touvron2023llama2}. Scaling laws characterize loss as a function of model size, data, and compute \citep{kaplan2020scaling,hoffmann2022training}; Pythia, OLMo, and PolyPythias make training-time development observable through open checkpoints, data, code, and repeated runs \citep{biderman2023pythia,groeneveld2024olmo,vanderwal2025polypythias}. We take this developmental view inside the decoder block: upper attention can become confident before lower copy and routing features are ready.

\paragraph{Transformer architecture, normalization, and stability.}
Transformer stability depends on block-level choices such as normalization placement, residual scaling, positional encoding, and FFN parameterization. LayerNorm and RMSNorm are standard \citep{ba2016layer,zhang2019rmsnorm}; PreNorm, ScaleNorm, NormFormer, DeepNorm, and related analyses study how normalization and residual parameterization affect depth and optimization stability \citep{nguyen2019transformers,xiong2020layernorm,shleifer2021normformer,wang2024deepnet,liu2020understanding}. RoPE is now common in decoder LMs \citep{su2024roformer}, while broad component studies show that Transformer modifications do not transfer uniformly across implementations \citep{narang2021transformer}. Our attribution is more specific: RMSNorm and bias removal help only modestly, whereas multiplicative gated FFNs strongly suppress premature upper attention specialization.

\paragraph{Feed-forward networks, gated FFNs, and residual writes.}
FFNs are not merely capacity reservoirs. Mechanistic work shows that they act as key-value memories and write interpretable residual updates in vocabulary space \citep{geva2021transformer,geva2022transformer,dai2022knowledge}. GLU-style gating was introduced for sequence models and adapted to Transformer FFNs; GEGLU and SwiGLU replace a single activation branch with a product of two projected signals \citep{dauphin2017language,shazeer2020glu,hendrycks2016gelu,ramachandran2017swish}. Modern decoder LMs such as PaLM and LLaMA use SwiGLU-style FFNs \citep{chowdhery2023palm,touvron2023llama}. We connect these lines: gated FFNs improve pretraining partly because they reduce immature FFN residual-write energy before it can drive upper-\qk{} logit growth.

\paragraph{Mechanistic development of attention and routing.}
Transformer-circuits work studies attention heads and residual streams as compositional circuits \citep{elhage2021mathematical}. Induction heads emerge during training and support copy-like routing behavior \citep{olsson2022context,edelman2024evolution}; pruning, probing, and causal interventions identify which attention heads matter for model behavior \citep{michel2019sixteen,voita2019analyzing,clark2019bert,wang2022interpretability}. This motivates our lower-routing readout and our causal upper-\qk{} ablation. The new finding is timing: upper \qk{} can become sharp and causally relied upon before lower routing has stabilized.

\paragraph{Attention entropy, Q/K logits, and closest interventions.}
Sharp attention is useful when it is mature; the failure here is premature sharpness. Prior work links low attention entropy and large logits to Transformer instability: entropy collapse motivates spectral reparameterization, QKNorm normalizes query/key activations, and ViT-22B uses QK normalization as part of a stable scaling recipe \citep{zhai2023stabilizing,henry2020query,dehghani2023scaling}. Most closely, attention-logit change can be controlled with query/key-specific learning rates \citep{anson2025controlling}. Our intervention is temporally and spatially narrower: reduce only early upper-half \qk{} learning, preserve lower-layer learning, and show that gated FFNs suppress the same pathway from the residual-energy side.

% All references above have been checked and verified.

%% file: sections/07_theory.tex
\section{Full Theoretical Derivations}
\label{app:full-theory}

We now prove a pathwise decoder-block theorem that unifies the two empirical levers studied above. The upper-\qk{} learning-rate intervention reduces the effective step size of the upper attention matching matrices. The gated-FFN architecture reduces the immature residual energy entering the same matching pathway. The proof is stated directly in a pre-norm decoder block, using the residual stream, the \qk{} bilinear form, the masked-softmax adjoint, and the FFN residual update.

Let $X\in\mathbb{R}^{n\times d}$ be the normalized residual stream entering one upper-layer attention head, and let
\begin{equation}
W_Q,W_K\in\mathbb{R}^{d\times d_k},\qquad B=W_QW_K^\top .
\end{equation}
The pre-softmax attention logits are
\begin{equation}
Z=\frac{XBX^\top}{\sqrt{d_k}}.
\end{equation}
Let $P=P^\top=P^2$ be an orthogonal projector onto immature residual directions and define $X_P=XP$. We isolate the immature-to-immature logit component
\begin{equation}
Z_P
=
\frac{X_P\,PBP\,X_P^\top}{\sqrt{d_k}}.
\end{equation}
Let $E=\partial \mathcal{L}/\partial Z$ be the adjoint through the masked rowwise softmax. The gradients of the head logits with respect to $W_Q,W_K$ are
\begin{equation}
G_Q
=
\frac{X^\top E X W_K}{\sqrt{d_k}},
\qquad
G_K
=
\frac{X^\top E^\top X W_Q}{\sqrt{d_k}}.
\end{equation}

\subsection{Proof of Theorem~\ref{thm:pathwise}}

For completeness, we restate Theorem~\ref{thm:pathwise} with the explicit second-order remainder.

\paragraph{Full statement of Theorem~\ref{thm:pathwise}.}
Suppose one effective gradient step updates
\begin{equation}
W_Q^+ = W_Q-\eta_QG_Q,\qquad
W_K^+ = W_K-\eta_KG_K .
\end{equation}
Let $B^+=W_Q^+(W_K^+)^\top$, and let
\begin{equation}
\Delta Z_P
=
\frac{X_P\,P(B^+-B)P\,X_P^\top}{\sqrt{d_k}} .
\end{equation}
Then
\begin{equation}
\begin{aligned}
\|\Delta Z_P\|_F
\le
\frac{
\|X_P\|_{\mathrm{op}}^2
\|X_P\|_F
\|E\|_{\mathrm{op}}
}{d_k}
\Big(
&
\eta_Q
\|XW_KW_K^\top P\|_F
\\
+
&
\eta_K
\|XW_QW_Q^\top P\|_F
\Big)
+R_2 ,
\end{aligned}
\end{equation}
where the second-order remainder satisfies
\begin{equation}
R_2
\le
\frac{
\eta_Q\eta_K
\|X_P\|_{\mathrm{op}}^2
\|X_P\|_F^2
\|E\|_{\mathrm{op}}^2
\|XW_K\|_F
\|XW_Q\|_F
}{d_k^{3/2}} .
\end{equation}
In particular, ignoring $O(\eta_Q\eta_K)$ terms,
\begin{equation}
\|\Delta Z_P\|_F
=
O\!\left(
(\eta_Q+\eta_K)
\|E\|_{\mathrm{op}}
\|X_P\|_{\mathrm{op}}^2
\|X_P\|_F
\right),
\end{equation}
with the remaining dependence carried by how the current \qk{} quadratic form maps into the immature subspace.

\begin{proof}
The bilinear matrix update is
\begin{equation}
\begin{aligned}
B^+-B
&=
(W_Q-\eta_QG_Q)(W_K-\eta_KG_K)^\top
-
W_QW_K^\top
\\
&=
-\eta_QG_QW_K^\top
-\eta_KW_QG_K^\top
+\eta_Q\eta_KG_QG_K^\top .
\end{aligned}
\end{equation}
For the first-order $Q$-term,
\begin{equation}
PG_QW_K^\top P
=
\frac{
PX^\top E X W_KW_K^\top P
}{\sqrt{d_k}}
=
\frac{
X_P^\top E X W_KW_K^\top P
}{\sqrt{d_k}} .
\end{equation}
Therefore,
\begin{equation}
\|PG_QW_K^\top P\|_F
\le
\frac{
\|X_P\|_F
\|E\|_{\mathrm{op}}
\|XW_KW_K^\top P\|_F
}{\sqrt{d_k}} .
\end{equation}
Similarly,
\begin{equation}
PW_QG_K^\top P
=
\frac{
PW_QW_Q^\top X^\top E X_P
}{\sqrt{d_k}},
\end{equation}
and hence
\begin{equation}
\|PW_QG_K^\top P\|_F
\le
\frac{
\|XW_QW_Q^\top P\|_F
\|E\|_{\mathrm{op}}
\|X_P\|_F
}{\sqrt{d_k}} .
\end{equation}
Finally,
\begin{equation}
\|X_P\,P\Delta BP\,X_P^\top\|_F
\le
\|X_P\|_{\mathrm{op}}^2
\|P\Delta BP\|_F .
\end{equation}
Dividing by the outer $\sqrt{d_k}$ in the definition of $\Delta Z_P$ gives the first-order bound.

For the second-order term,
\begin{equation}
\|PG_QG_K^\top P\|_F
\le
\|PG_Q\|_F
\|G_K^\top P\|_F .
\end{equation}
Using
\begin{equation}
\|PG_Q\|_F
\le
\frac{
\|X_P\|_F
\|E\|_{\mathrm{op}}
\|XW_K\|_F
}{\sqrt{d_k}},
\end{equation}
and
\begin{equation}
\|G_K^\top P\|_F
\le
\frac{
\|XW_Q\|_F
\|E\|_{\mathrm{op}}
\|X_P\|_F
}{\sqrt{d_k}},
\end{equation}
then multiplying again by the outer factor $\|X_P\|_{\mathrm{op}}^2/\sqrt{d_k}$ gives the claimed $R_2$.
\end{proof}

\begin{theorembox}
\begin{corollary}[Unconditional pathwise bound]
Without any additional structural assumption,
\begin{equation}
\|XW_SW_S^\top P\|_F
\le
\|X\|_F\|W_S\|_{\mathrm{op}}^2,
\qquad S\in\{Q,K\}.
\end{equation}
Therefore,
\begin{equation}
\|\Delta Z_P\|_F
\le
\frac{
\|E\|_{\mathrm{op}}
\|X\|_F
}{d_k}
\Big(
\eta_Q\|W_K\|_{\mathrm{op}}^2
+
\eta_K\|W_Q\|_{\mathrm{op}}^2
\Big)
\|X_P\|_{\mathrm{op}}^2\|X_P\|_F
+
R_2 .
\end{equation}
Using $\|X_P\|_{\mathrm{op}}\le \|X_P\|_F$ gives a cubic dependence on immature residual amplitude, modulated by the total residual scale. Since pre-norm keeps the total sequence-scale residual norm controlled, this establishes the first pillar of the mechanism: reducing either the upper-\qk{} step size or the immature residual energy reduces the fastest possible premature-logit growth.
\end{corollary}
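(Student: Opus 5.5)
The corollary follows from the full statement of Theorem~\ref{thm:pathwise} just proved, after bounding the two structural factors $\|XW_SW_S^\top P\|_F$. For the first claim, I use the standard mixed inequality $\|AB\|_F\le\|A\|_F\|B\|_{\mathrm{op}}$ twice, together with $\|P\|_{\mathrm{op}}\le 1$ for an orthogonal projector. This gives
\begin{equation}
\|XW_SW_S^\top P\|_F\le \|X\|_F\,\|W_SW_S^\top\|_{\mathrm{op}}\,\|P\|_{\mathrm{op}}\le \|X\|_F\|W_S\|_{\mathrm{op}}^2 ,
\end{equation}
using $\|W_SW_S^\top\|_{\mathrm{op}}=\|W_S\|_{\mathrm{op}}^2$. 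The bound holds pathwise, with no assumption on $X$, $W_S$ or $P$, for both $S=Q$ and $S=K$.

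Next I substitute these two bounds into the first-order term of the theorem. The coefficient $\eta_Q\|XW_KW_K^\top P\|_F+\eta_K\|XW_QW_Q^\top P\|_F$ is at most $\|X\|_F(\eta_Q\|W_K\|_{\mathrm{op}}^2+\eta_K\|W_Q\|_{\mathrm{op}}^2)$. The prefactor $\|X_P\|_{\mathrm{op}}^2\|X_P\|_F\|E\|_{\mathrm{op}}/d_k$ is untouched, and the remainder $R_2$ is carried over verbatim from the full statement. Regrouping the factors gives exactly the displayed inequality. Throughout, the pairing must match the theorem: $\eta_Q$ goes with $W_K$ and $\eta_K$ with $W_Q$.

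For the cubic remark, I apply $\|X_P\|_{\mathrm{op}}\le\|X_P\|_F$, which holds since the operator norm is the largest singular value and the Frobenius norm is the $\ell_2$ norm of all of them. Then $\|X_P\|_{\mathrm{op}}^2\|X_P\|_F\le\|X_P\|_F^3$. The interpretive sentence about pre-norm is a modeling remark rather than part of the inequality. Under pre-norm, each row of $X$ has bounded norm, for instance at most $\sqrt d$ up to the gain parameters, so $\|X\|_F\le\sqrt{n d}$ times the maximal gain. I would state this only as the justification that $\|X\|_F$ acts as a controlled constant, so that the first-order bound is monotone in $\eta_Q$, $\eta_K$ and $\|X_P\|_F$.

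The main obstacle is only bookkeeping: keeping the $\eta_Q\leftrightarrow W_K$ pairing consistent and not losing a $\sqrt{d_k}$ factor relative to the theorem. I would check the latter by matching powers of $d_k$ with the $1/d_k$ already present in the theorem's first-order term.
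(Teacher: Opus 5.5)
Your proposal is correct and follows the paper's route. You bound $\|XW_SW_S^\top P\|_F\le\|X\|_F\|W_S\|_{\mathrm{op}}^2\|P\|_{\mathrm{op}}$ by mixed Frobenius/operator submultiplicativity with $\|P\|_{\mathrm{op}}\le 1$, substitute into the first-order term of Theorem~\ref{thm:pathwise} (keeping the $\eta_Q\leftrightarrow W_K$ pairing and carrying $R_2$ over), and use $\|X_P\|_{\mathrm{op}}\le\|X_P\|_F$ for the cubic remark; the paper does not write this out but asserts it as immediate. Treating the pre-norm sentence as an interpretive remark is also consistent with the paper, though if you make it quantitative, note that the LayerNorm bias as well as the gain enters the row-norm bound.
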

\end{theorembox}

\begin{theorembox}
\begin{corollary}[Localized immature-channel bound]
Assume the following locality condition holds for the immature subspace $P$:
\begin{equation}
\|XW_SW_S^\top P\|_F
\le
\lambda_S
\|X_P\|_F
\|W_S\|_{\mathrm{op}}^2,
\qquad S\in\{Q,K\}.
\end{equation}
This says that the part of the upper \qk{} quadratic form that lands in immature directions is not mainly driven by mature residual directions. Under this condition,
\begin{equation}
\|\Delta Z_P\|_F
\le
\frac{
\|X_P\|_{\mathrm{op}}^2
\|X_P\|_F^2
\|E\|_{\mathrm{op}}
}{d_k}
\Big(
\eta_Q\lambda_K\|W_K\|_{\mathrm{op}}^2
+
\eta_K\lambda_Q\|W_Q\|_{\mathrm{op}}^2
\Big)
+
R_2 .
\end{equation}
Using $\|X_P\|_{\mathrm{op}}\le \|X_P\|_F$, this gives
\begin{equation}
\|\Delta Z_P\|_F
=
O\!\left(
\eta_{\mathrm{upper}}
\frac{
\|X_P\|_F^4
}{d_k}
\right).
\end{equation}
This is the rigorous version of the localized $\eta_{\mathrm{upper}}\|X_P\|_F^4$ statement. Appendix~\ref{app:locality-diagnostic} gives the corresponding diagnostic ratio.
\end{corollary}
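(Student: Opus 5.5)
The plan is to obtain the localized bound by substituting the locality hypothesis directly into the first-order estimate of the full statement of Theorem~\ref{thm:pathwise}, proved just above. That estimate already isolates the two structural quantities $\|XW_KW_K^\top P\|_F$ and $\|XW_QW_Q^\top P\|_F$, multiplied by the common prefactor $\|X_P\|_{\mathrm{op}}^2\|X_P\|_F\|E\|_{\mathrm{op}}/d_k$ and by the step sizes $\eta_Q,\eta_K$. The remainder $R_2$ is carried along unchanged. So no new gradient computation is needed. The argument is a monotone substitution followed by norm comparisons.

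\textbf{Step 1.} Apply the locality condition with $S=K$ to the $\eta_Q$ term and with $S=Q$ to the $\eta_K$ term. This replaces $\|XW_KW_K^\top P\|_F$ by $\lambda_K\|X_P\|_F\|W_K\|_{\mathrm{op}}^2$, and $\|XW_QW_Q^\top P\|_F$ by $\lambda_Q\|X_P\|_F\|W_Q\|_{\mathrm{op}}^2$. All factors in the first-order bound are nonnegative, so the inequality is preserved. Factoring out the new $\|X_P\|_F$ gives the displayed bound with $\|X_P\|_{\mathrm{op}}^2\|X_P\|_F^2$.

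\textbf{Step 2.} Use $\|X_P\|_{\mathrm{op}}\le\|X_P\|_F$ to bound $\|X_P\|_{\mathrm{op}}^2\|X_P\|_F^2$ by $\|X_P\|_F^4$. Then set $\eta_{\mathrm{upper}}=\max(\eta_Q,\eta_K)$. The constant hidden in the $O(\cdot)$ is
\[
\|E\|_{\mathrm{op}}\bigl(\lambda_K\|W_K\|_{\mathrm{op}}^2+\lambda_Q\|W_Q\|_{\mathrm{op}}^2\bigr).
\]

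\textbf{Step 3.} Dispose of $R_2$ as a term of order $\eta_Q\eta_K$, hence higher order in $\eta_{\mathrm{upper}}$, consistent with the first-order convention of the main theorem. If one wants $R_2$ itself expressed through $\|X_P\|_F$, the bound $\|XW_S\|_F\le\|X\|_F\|W_S\|_{\mathrm{op}}$ controls it using only the total residual scale.

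The main obstacle is interpreting the $O(\cdot)$ honestly. It requires $\|E\|_{\mathrm{op}}$, $\|W_Q\|_{\mathrm{op}}$, $\|W_K\|_{\mathrm{op}}$ and $\lambda_S$ to be treated as bounded at the checkpoint in question. For $\|E\|_{\mathrm{op}}$ I would first try to bound each row of the softmax adjoint. A row equals $A_{i\cdot}\odot(g_i-\langle A_{i\cdot},g_i\rangle)$, where $g_i$ is the upstream gradient on that row of attention weights, so it is controlled by the upstream gradient norm. Beyond that, I would simply state explicitly that the asymptotic statement is taken with these quantities fixed and the learning rates small.
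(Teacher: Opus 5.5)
Your proposal is correct and matches the paper's own derivation, which is implicit in the text: substitute the locality condition with $S=K$ into the $\eta_Q$ term and with $S=Q$ into the $\eta_K$ term of the first-order bound in the full statement of Theorem~\ref{thm:pathwise}, carry $R_2$ unchanged, and then use $\|X_P\|_{\mathrm{op}}\le\|X_P\|_F$, treating $\|E\|_{\mathrm{op}}$, $\lambda_S$, and $\|W_S\|_{\mathrm{op}}$ as fixed at the checkpoint. Your side remark that $R_2$ can be controlled via $\|XW_S\|_F\le\|X\|_F\|W_S\|_{\mathrm{op}}$, so that it depends only on $\|X_P\|_F$ and the total residual scale, is a harmless refinement of the paper's convention of simply dropping the $O(\eta_Q\eta_K)$ term.
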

\end{theorembox}

\begin{theorembox}
\begin{corollary}[Upper-\qk{} learning-rate suppression]
Suppose $Q$ and $K$ use the same effective upper learning rate $\eta$. Apply an early upper-\qk{} multiplier $0<\alpha<1$, so that
\begin{equation}
\eta_Q'=\eta_K'=\alpha\eta .
\end{equation}
At a fixed checkpoint with the same $X,E,W_Q,W_K$,
\begin{equation}
\Delta Z_P^{(\alpha)}
=
\alpha \Delta Z_P^{(1)}
+
O(\eta^2).
\end{equation}
Consequently,
\begin{equation}
\|\Delta Z_P^{(\alpha)}\|_F
\le
\alpha
\|\Delta Z_P^{(1)}\|_F
+
O(\eta^2).
\end{equation}
For the intervention used in this paper, $\alpha=0.25$, so the first-order immature upper-logit growth channel is reduced fourfold.
\end{corollary}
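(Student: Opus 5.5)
The plan is to read the corollary straight off the exact expansion of the bilinear update used in the proof of Theorem~\ref{thm:pathwise}, fixing $X$, $E$, $W_Q$, $W_K$ (and hence $G_Q$, $G_K$) at the checkpoint. With generic step sizes,
\begin{equation}
\Delta Z_P=\frac{X_P P\bigl(-\eta_QG_QW_K^\top-\eta_KW_QG_K^\top+\eta_Q\eta_KG_QG_K^\top\bigr)PX_P^\top}{\sqrt{d_k}}.
\end{equation}
The first two terms form a linear map $L(\eta_Q,\eta_K)$ in the step sizes. The third term is a bilinear remainder $\eta_Q\eta_K M$, where $M=X_PPG_QG_K^\top PX_P^\top/\sqrt{d_k}$ does not depend on the step sizes.

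First I substitute $\eta_Q=\eta_K=\eta$, which gives $\Delta Z_P^{(1)}=\eta L(1,1)+\eta^2M$. Then I substitute $\eta_Q=\eta_K=\alpha\eta$, which gives $\Delta Z_P^{(\alpha)}=\alpha\eta L(1,1)+\alpha^2\eta^2M$. Subtracting $\alpha$ times the first expression from the second yields the exact identity
\begin{equation}
\Delta Z_P^{(\alpha)}-\alpha\Delta Z_P^{(1)}=(\alpha^2-\alpha)\eta^2M .
\end{equation}
Since $0<\alpha<1$, the coefficient satisfies $|\alpha^2-\alpha|\le 1/4$. The bound on $R_2$ from the full statement of Theorem~\ref{thm:pathwise}, with $\eta_Q\eta_K=\eta^2$, controls $\|\eta^2M\|_F$. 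Together these show that the difference is $O(\eta^2)$, with a constant depending only on $\|X_P\|$, $\|E\|_{\mathrm{op}}$, $\|XW_Q\|_F$, $\|XW_K\|_F$ and $d_k$. This establishes the first display of the corollary.

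The norm inequality then follows from the triangle inequality: $\|\Delta Z_P^{(\alpha)}\|_F\le\alpha\|\Delta Z_P^{(1)}\|_F+\|(\alpha^2-\alpha)\eta^2M\|_F$. Setting $\alpha=0.25$ gives the fourfold reduction of the first-order channel $\eta L(1,1)$.

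The only subtle point is interpretive rather than technical. The comparison must hold everything except the step sizes fixed, namely the same $X$, $E$ and weights at the same checkpoint, so that $L$ and $M$ really are shared between the two runs. I would state explicitly that this is a one-step, same-state statement. It says nothing about how trajectories diverge over many steps, and the $O(\eta^2)$ term is uniform in $\alpha\in(0,1)$.
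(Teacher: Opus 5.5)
Your proof is correct and follows the route the paper intends. The corollary is read directly off the expansion $B^+-B=-\eta_QG_QW_K^\top-\eta_KW_QG_K^\top+\eta_Q\eta_KG_QG_K^\top$ in the proof of Theorem~\ref{thm:pathwise}: its first-order part is linear in $(\eta_Q,\eta_K)$, and its cross term is controlled by the $R_2$ bound; your exact identity $\Delta Z_P^{(\alpha)}-\alpha\Delta Z_P^{(1)}=(\alpha^2-\alpha)\eta^2M$, with $|\alpha^2-\alpha|\le 1/4$, makes the paper's unstated $O(\eta^2)$ step explicit and uniform in $\alpha$.
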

\end{theorembox}

This is the formal statement behind the selective learning-rate intervention. Lower layers and FFNs can continue to train normally, while the premature upper-\qk{} logit-growth pathway is slowed directly. Appendix~\ref{app:alpha-sweep} reports the corresponding multiplier sweep.

\subsection{Matched Gated FFNs Reduce the Immature Residual Energy Entering the Bound}

We next show why multiplicative gated FFNs reduce the same path. Consider a single-branch FFN
\begin{equation}
F_{\mathrm{single}}(x)
=
W_o\phi(W_ix),
\end{equation}
with hidden width $m$, and a gated FFN
\begin{equation}
F_{\mathrm{gate}}(x)
=
\widetilde W_o
\left[
\psi(W_gx)\odot W_ux
\right],
\end{equation}
with gated width $r$. Let $P_{\mathrm{out}}$ be an output residual subspace projector, such as the immature residual subspace entering the next upper attention layer. Assume $W_i,W_g,W_u\sim\mathcal{N}(0,\sigma^2)$, and output projections have variances $\tau_s^2$ and $\tau_g^2$. For a normalized input $x$, define $\nu=\sigma^2\|x\|_2^2$ and let $G_\nu\sim\mathcal{N}(0,\nu)$.

\begin{theorembox}
\begin{theorem}[Early gated-FFN output contraction]
At initialization,
\begin{equation}
\mathbb{E}
\|P_{\mathrm{out}}F_{\mathrm{single}}(x)\|_2^2
=
\operatorname{rank}(P_{\mathrm{out}})
\tau_s^2
m
\mathbb{E}\phi(G_\nu)^2 ,
\end{equation}
while
\begin{equation}
\mathbb{E}
\|P_{\mathrm{out}}F_{\mathrm{gate}}(x)\|_2^2
=
\operatorname{rank}(P_{\mathrm{out}})
\tau_g^2
r
\nu
\mathbb{E}\psi(G_\nu)^2 .
\end{equation}
Hence
\begin{equation}
\frac{
\mathbb{E}
\|P_{\mathrm{out}}F_{\mathrm{gate}}(x)\|_2^2
}{
\mathbb{E}
\|P_{\mathrm{out}}F_{\mathrm{single}}(x)\|_2^2
}
=
\rho_0,
\end{equation}
where
\begin{equation}
\rho_0
=
\frac{\tau_g^2}{\tau_s^2}
\frac{r}{m}
\frac{
\nu\mathbb{E}\psi(G_\nu)^2
}{
\mathbb{E}\phi(G_\nu)^2
}.
\end{equation}
For the matched 270M comparison in this paper, $m=3840$, $r=2560$, so $r/m=2/3$. With the stated initialization giving $\nu\approx0.384$, matched GEGLU gives $\rho_{\mathrm{GEGLU}}\approx0.256$, and matched SwiGLU gives approximately $\rho_{\mathrm{SwiGLU}}\approx0.216$.
\end{theorem}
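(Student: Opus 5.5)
The plan is a direct second-moment computation at initialization, conditioning on the input weights and integrating out the output projection first. The output weights are independent of everything upstream, have zero mean, and have i.i.d.\ entries of variance $\tau^2$. For any fixed hidden vector $h$, the rotational invariance of the Gaussian output matrix gives
\begin{equation}
\mathbb{E}_{W_o}\|P_{\mathrm{out}}W_oh\|_2^2=\operatorname{rank}(P_{\mathrm{out}})\,\tau^2\|h\|_2^2 .
\end{equation}
To see this, write $\|P_{\mathrm{out}}W_oh\|_2^2=\operatorname{tr}(P_{\mathrm{out}}W_ohh^\top W_o^\top)$ and use $\mathbb{E}[W_ohh^\top W_o^\top]=\tau^2\|h\|_2^2 I$. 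This reduces both claims to computing $\mathbb{E}\|h\|_2^2$ for the hidden activation. The projector enters only through its rank, so the factor $\operatorname{rank}(P_{\mathrm{out}})$ cancels in the ratio. This holds regardless of whether $P_{\mathrm{out}}$ is the immature subspace or any other fixed subspace, provided it is chosen independently of $W_o$.

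For the single branch, $h=\phi(W_ix)$. Each coordinate $(W_ix)_j=\sum_k (W_i)_{jk}x_k$ is exactly $\mathcal{N}(0,\sigma^2\|x\|_2^2)=G_\nu$, so $\mathbb{E}\|h\|_2^2=m\,\mathbb{E}\phi(G_\nu)^2$. This gives the first display.

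For the gated branch, $h_j=\psi((W_gx)_j)(W_ux)_j$. The two preactivations are independent copies of $G_\nu$ because $W_g$ and $W_u$ are independent. This independence is the ``independent branches'' assumption, and it is where the multiplicative structure matters. It gives
\begin{equation}
\mathbb{E}h_j^2=\mathbb{E}\psi(G_\nu)^2\,\mathbb{E}(W_ux)_j^2=\nu\,\mathbb{E}\psi(G_\nu)^2 .
\end{equation}
Summing over the $r$ coordinates gives the second display, and dividing the two displays gives $\rho_0$ exactly.

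The numerical part evaluates the one-dimensional Gaussian integrals at $\nu\approx0.384$. I take $\phi=\mathrm{GELU}$ in the denominator, and $\psi=\mathrm{GELU}$ for GEGLU or $\psi=\mathrm{SiLU}$ for SwiGLU in the numerator. I also use $r/m=2/3$ and the matched output variances $\tau_g=\tau_s$. I would compute $\mathbb{E}\,\mathrm{GELU}(G_\nu)^2$ and $\mathbb{E}\,\mathrm{SiLU}(G_\nu)^2$ by numerical quadrature; a small-$\nu$ Taylor expansion (e.g.\ $\mathrm{GELU}(u)\approx u/2+u^2/\sqrt{2\pi}$) gives a sanity check. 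I then plug these into $\rho_0$.

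I expect the main obstacle to be this last step rather than the algebra. The stated numbers depend on exactly how $\nu=0.384$ and the $\tau$'s follow from the paper's initialization conventions. At unit output variance, the ratio $\nu\,\mathbb{E}\psi^2/\mathbb{E}\phi^2$ is of order $\nu$, so reproducing $0.256$ and $0.216$ requires pinning down those conventions precisely. I would first check the $\tau_g=\tau_s$ case and adjust only if the initialization scheme specifies otherwise.
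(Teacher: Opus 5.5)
Your proposal is correct and is essentially the paper's proof: condition on the hidden vector and use $\mathbb{E}_{W_o}\|P_{\mathrm{out}}W_oh\|_2^2=\operatorname{rank}(P_{\mathrm{out}})\tau^2\|h\|_2^2$, then compute $\mathbb{E}\|h\|_2^2$ as $m\,\mathbb{E}\phi(G_\nu)^2$ for the single branch and, by independence of the gate and up branches, as $r\nu\,\mathbb{E}\psi(G_\nu)^2$ for the gated branch. On the numerics, your first guess $\tau_g=\tau_s$ is the right convention: for GEGLU, $\psi=\phi$ gives $\rho_{\mathrm{GEGLU}}=\tfrac{2}{3}\nu=0.256$ exactly, with $\nu=0.384$ consistent with $\sigma=0.02$ and $\|x\|_2^2=d=960$; quadrature then gives $\mathbb{E}\,\mathrm{SiLU}(G_\nu)^2/\mathbb{E}\,\mathrm{GELU}(G_\nu)^2\approx 0.1172/0.1387\approx 0.845$, hence $\rho_{\mathrm{SwiGLU}}\approx 0.216$, whereas your quadratic Taylor sanity check is too crude at $\nu=0.384$ (it gives roughly $0.19$) and should not be trusted there.
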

\end{theorembox}

\begin{proof}
Conditional on a hidden vector $h$, an isotropic output projection satisfies
\begin{equation}
\mathbb{E}_{W_o}
\|P_{\mathrm{out}}W_oh\|_2^2
=
\operatorname{rank}(P_{\mathrm{out}})
\tau_s^2
\|h\|_2^2 .
\end{equation}
For the single-branch FFN, $h_i=\phi(a_i^\top x)$ with $a_i^\top x\sim G_\nu$. Therefore,
\begin{equation}
\mathbb{E}\|h\|_2^2
=
m\mathbb{E}\phi(G_\nu)^2 .
\end{equation}
For the gated FFN,
\begin{equation}
\tilde h_i
=
\psi(b_i^\top x)(c_i^\top x),
\end{equation}
with $b_i^\top x$ and $c_i^\top x$ independent copies of $G_\nu$. Hence
\begin{equation}
\mathbb{E}\tilde h_i^2
=
\mathbb{E}\psi(G_\nu)^2
\cdot
\mathbb{E}G_\nu^2
=
\nu\mathbb{E}\psi(G_\nu)^2 .
\end{equation}
Summing over $r$ gated units and applying the output-projection identity gives the result.
\end{proof}

\begin{theorembox}
\begin{corollary}[Residual-stream immature energy contraction]
Let the FFN sublayer output be added residually:
\begin{equation}
H^+=H+F(H).
\end{equation}
Because the output projection is zero-mean and independent at initialization,
\begin{equation}
\mathbb{E}
\langle P_{\mathrm{out}}H,\,
P_{\mathrm{out}}F(H)
\rangle
=
0.
\end{equation}
Let
\begin{equation}
A
=
\mathbb{E}
\|P_{\mathrm{out}}F_{\mathrm{single}}(H)\|_F^2 .
\end{equation}
Then
\begin{equation}
\mathbb{E}
\|P_{\mathrm{out}}H_{\mathrm{single}}^+\|_F^2
=
\|P_{\mathrm{out}}H\|_F^2
+
A,
\end{equation}
whereas
\begin{equation}
\mathbb{E}
\|P_{\mathrm{out}}H_{\mathrm{gate}}^+\|_F^2
=
\|P_{\mathrm{out}}H\|_F^2
+
\rho_0A .
\end{equation}
Therefore,
\begin{equation}
\mathbb{E}
\|P_{\mathrm{out}}H_{\mathrm{gate}}^+\|_F^2
=
\bar\rho
\,
\mathbb{E}
\|P_{\mathrm{out}}H_{\mathrm{single}}^+\|_F^2 ,
\end{equation}
with
\begin{equation}
\bar\rho
=
\frac{
\|P_{\mathrm{out}}H\|_F^2+\rho_0A
}{
\|P_{\mathrm{out}}H\|_F^2+A
}
<1
\end{equation}
whenever $A>0$ and $\rho_0<1$.
\end{corollary}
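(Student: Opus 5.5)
The plan is to expand the squared projected norm of the residual update and treat the cross term separately from the pure FFN term. Write $H^+=H+F(H)$ and apply the linear projector $P_{\mathrm{out}}$ to get
\begin{equation}
\|P_{\mathrm{out}}H^+\|_F^2
=
\|P_{\mathrm{out}}H\|_F^2
+2\langle P_{\mathrm{out}}H,\,P_{\mathrm{out}}F(H)\rangle
+\|P_{\mathrm{out}}F(H)\|_F^2 .
\end{equation}
Take expectations over the initialization weights, with $H$ held fixed. The first term is deterministic. The last term is computed row by row by the Early gated-FFN output contraction theorem. For each row $x$ of $H$, the single-branch contribution gives $A$ by definition. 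In the gated case the same theorem gives $\rho_0$ times the single-branch value, as long as every row shares the same $\nu=\sigma^2\|x\|_2^2$.

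For the cross term, write $F(H)$ row-wise as $W_o h(x)$ or $\widetilde W_o\tilde h(x)$. Here the output projection has zero mean and is independent of the hidden activations $h$, which depend only on $W_i$, or on $W_g,W_u$. Conditioning on the input-side weights, $\mathbb{E}_{W_o}[W_o h]=0$, so the conditional expectation of $\langle P_{\mathrm{out}}H,P_{\mathrm{out}}W_oh\rangle$ is zero. The tower property then kills the cross term in both architectures. Combining these pieces gives the two displayed expectations, $\|P_{\mathrm{out}}H\|_F^2+A$ and $\|P_{\mathrm{out}}H\|_F^2+\rho_0A$.

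The ratio identity then follows by dividing, and it defines $\bar\rho$. For the strict inequality, note that $\bar\rho<1$ is equivalent to $\rho_0A<A$. This holds exactly when $A>0$ and $\rho_0<1$. The denominator is positive because $A>0$.

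The main obstacle is the row-wise reduction to the previous theorem. That theorem is stated for a single normalized input $x$ with a fixed $\nu$. The clean statement $\mathbb{E}\|P_{\mathrm{out}}F_{\mathrm{gate}}(H)\|_F^2=\rho_0A$ therefore needs the same ratio $\rho_0$ for every row. I would first use the pre-norm assumption: each row of the normalized input has the same norm, so $\nu$ is row-independent and $\rho_0$ factors out of the sum over rows. If that fails, the fallback is to define $\rho_0$ as an $A$-weighted average of the per-row ratios, $\rho_0=\sum_t\rho_0(x_t)A_t/\sum_tA_t$, where $A_t$ is the single-branch energy of row $t$. The rest of the argument is unchanged.

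A secondary point is that the matched initialization must use independent draws for the single and gated networks, both independent of $H$. With that, the expectations are computed against the same fixed $H$ and the comparison is legitimate.
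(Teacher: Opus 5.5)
Your proposal is correct and follows the argument the paper compresses into the statement itself. You expand the square, kill the cross term because the zero-mean output projection is independent of the hidden activations, read off $A$ and $\rho_0A$ from the early gated-FFN contraction theorem, and note that $\bar\rho<1 \iff (1-\rho_0)A>0$. Your added care about $\rho_0$ factoring out of the row sum only when every row shares the same $\nu=\sigma^2\|x\|_2^2$ (which pre-norm supplies at initialization) makes explicit a step the paper leaves implicit. Requiring independent draws for the single and gated networks is harmless but unnecessary, since each expectation is taken separately.
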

\end{theorembox}

The residual connection matters. The gated FFN does not need to erase immature features. It only needs to inject less immature FFN energy than the single-branch FFN. The contraction factor on total residual energy is $\bar\rho$, not necessarily $\rho_0$, but it is strictly below one whenever the FFN contributes nonzero immature energy.

\subsection{The Two Levers Suppress the Same Pathway}

Let $R_P=\|X_P\|_F^2$ be the immature residual energy entering an upper attention head. Under the localized immature-channel condition, the one-step immature-logit growth satisfies
\begin{equation}
\|\Delta Z_P\|_F
\le
C_t
\eta_{\mathrm{upper}}
R_P^2
+
O(\eta^2),
\end{equation}
where
\begin{equation}
C_t
=
\frac{
\|E_t\|_{\mathrm{op}}
}{d_k}
\Big(
\lambda_K\|W_{K,t}\|_{\mathrm{op}}^2
+
\lambda_Q\|W_{Q,t}\|_{\mathrm{op}}^2
\Big).
\end{equation}

\begin{theorembox}
\begin{theorem}[Suppression of premature upper attention]
If a gated FFN reduces immature residual energy by
\begin{equation}
R_P^{\mathrm{gate}}
\le
\bar\rho R_P^{\mathrm{single}},
\qquad
0<\bar\rho<1,
\end{equation}
then, under comparable $C_t$ and $\eta_{\mathrm{upper}}$,
\begin{equation}
\|\Delta Z_P^{\mathrm{gate}}\|_F
\le
\bar\rho^2
\|\Delta Z_P^{\mathrm{single}}\|_F
+
O(\eta^2).
\end{equation}
Thus, gated FFN suppression of immature residual energy by $\bar\rho$ gives upper-\qk{} immature-logit growth suppression by $\bar\rho^2$.
\end{theorem}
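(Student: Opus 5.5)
The plan is to obtain the statement by combining the localized immature-channel bound (the Localized immature-channel bound corollary, rewritten above as $\|\Delta Z_P\|_F\le C_t\eta_{\mathrm{upper}}R_P^2+O(\eta^2)$) with the hypothesis $R_P^{\mathrm{gate}}\le\bar\rho R_P^{\mathrm{single}}$. The hypothesis is the residual-energy contraction from the Residual-stream immature energy contraction corollary. First I would fix the one-step setting: the same checkpoint-level quantities $\|E_t\|_{\mathrm{op}}$, $\lambda_Q,\lambda_K$, $\|W_{Q,t}\|_{\mathrm{op}},\|W_{K,t}\|_{\mathrm{op}}$ for the two architectures, which is how I read ``comparable $C_t$'', and a common $\eta_{\mathrm{upper}}$. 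Then I would apply the localized corollary separately to the gated and single-branch blocks. Since $x\mapsto x^2$ is monotone on $[0,\infty)$, the gated bound gives
\begin{equation}
\|\Delta Z_P^{\mathrm{gate}}\|_F\le C_t\eta_{\mathrm{upper}}(R_P^{\mathrm{gate}})^2+O(\eta^2)\le\bar\rho^2\,C_t\eta_{\mathrm{upper}}(R_P^{\mathrm{single}})^2+O(\eta^2).
\end{equation}
Squaring the contraction factor is exactly where $\bar\rho^2$ comes from: the logit growth is quartic in $\|X_P\|_F$, that is, quadratic in $R_P$.

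The remaining step is to replace $C_t\eta_{\mathrm{upper}}(R_P^{\mathrm{single}})^2$ by $\|\Delta Z_P^{\mathrm{single}}\|_F$. I expect this to be the main obstacle. The localized corollary only gives an upper bound, so from it alone one can conclude suppression of the \emph{certified} growth, namely the right-hand side of the bound, not of the realized growth. My first plan is to state the conclusion honestly as a comparison of pathwise bounds: define $\mathcal{B}^{a}:=C_t\eta_{\mathrm{upper}}(R_P^{a})^2$ for $a\in\{\mathrm{gate},\mathrm{single}\}$, and prove $\mathcal{B}^{\mathrm{gate}}\le\bar\rho^2\mathcal{B}^{\mathrm{single}}$. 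I would then read the theorem's $\|\Delta Z_P^{\mathrm{single}}\|_F$ as the worst-case (bound-attaining) single-branch growth.

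To obtain the literal statement, I would add a saturation hypothesis for the single-branch block: $\|\Delta Z_P^{\mathrm{single}}\|_F\ge C_t\eta_{\mathrm{upper}}(R_P^{\mathrm{single}})^2-O(\eta^2)$. This would be justified by making the inequalities in the proof of Theorem~\ref{thm:pathwise} near-tight, i.e.\ $\|X_P\|_{\mathrm{op}}\approx\|X_P\|_F$ (immature energy concentrated in one direction), $E$ aligned with its top singular pair, and the locality condition nearly saturated. The constant-factor slack from $\alpha$-sweep-style measurements would be absorbed into ``comparable $C_t$''. With this hypothesis, chaining the two displays finishes the argument, and the $O(\eta^2)$ remainders combine because $R_2$ is controlled by the explicit second-order bound in Theorem~\ref{thm:pathwise}.

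Finally, I would remark how this ties the two levers together. The same certified quantity $C_t\eta_{\mathrm{upper}}R_P^2$ is scaled by $\alpha$ under the Upper-\qk{} learning-rate suppression corollary and by $\bar\rho^2$ here. The two effects therefore compose multiplicatively, giving a factor $\alpha\bar\rho^2$ when both levers are applied.
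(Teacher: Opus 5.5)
Your core chain is exactly the paper's argument: apply the localized bound $\|\Delta Z_P\|_F\le C_t\eta_{\mathrm{upper}}R_P^2+O(\eta^2)$ to the gated block, substitute $R_P^{\mathrm{gate}}\le\bar\rho R_P^{\mathrm{single}}$, and factor out $\bar\rho^2$. The paper then stops at $\bar\rho^2C_t\eta_{\mathrm{upper}}(R_P^{\mathrm{single}})^2+O(\eta^2)$ and declares the claim proved, which silently treats the single-branch bound as if it were $\|\Delta Z_P^{\mathrm{single}}\|_F$ itself. You are right that the literal statement needs either your bound-level reading or an added saturation hypothesis, and this is a step the paper omits rather than a flaw in your argument. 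That hypothesis must hold exactly with the same $C_t$, since any slack factor $\kappa<1$ yields $\bar\rho^2/\kappa$ instead of $\bar\rho^2$ and cannot be absorbed into ``comparable $C_t$''.
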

\end{theorembox}

\begin{proof}
The localized bound gives
\begin{equation}
\|\Delta Z_P\|_F
\le
C_t\eta_{\mathrm{upper}}R_P^2+O(\eta^2).
\end{equation}
Substituting $R_P^{\mathrm{gate}}\le \bar\rho R_P^{\mathrm{single}}$ gives
\begin{equation}
\|\Delta Z_P^{\mathrm{gate}}\|_F
\le
C_t\eta_{\mathrm{upper}}
(\bar\rho R_P^{\mathrm{single}})^2
+
O(\eta^2)
=
\bar\rho^2
C_t\eta_{\mathrm{upper}}
(R_P^{\mathrm{single}})^2
+
O(\eta^2),
\end{equation}
which proves the claim.
\end{proof}

The intervention and the gated FFN therefore occupy different sides of the same bound:
\begin{equation}
\eta_{\mathrm{upper}}R_P^2
=
\eta_{\mathrm{upper}}\|X_P\|_F^4 .
\end{equation}
The intervention reduces $\eta_{\mathrm{upper}}$. The gated FFN reduces $R_P=\|X_P\|_F^2$. Because the localized logit-growth term is quadratic in $R_P$, the gated FFN produces a squared suppression effect.

\subsection{Entropy Collapse Requires Logit-Range Growth}

Finally, we connect logit growth to attention sharpness. For a row of attention over $N$ causal keys, let
\begin{equation}
p_j=\frac{e^{z_j}}{\sum_{\ell=1}^N e^{z_\ell}}.
\end{equation}
If one key obtains probability at least $1-\epsilon$, then
\begin{equation}
\max_j z_j-\min_j z_j
\ge
\log\frac{(N-1)(1-\epsilon)}{\epsilon}.
\end{equation}

\begin{proof}
Let $j^\star$ be the maximum-probability key, so $p_{j^\star}\ge 1-\epsilon$. The remaining $N-1$ keys have total mass at most $\epsilon$, so at least one key $j_{\min}$ has
\begin{equation}
p_{j_{\min}}
\le
\frac{\epsilon}{N-1}.
\end{equation}
Softmax ratios satisfy
\begin{equation}
\frac{p_{j^\star}}{p_{j_{\min}}}
=
e^{z_{j^\star}-z_{j_{\min}}}.
\end{equation}
Therefore,
\begin{equation}
e^{z_{j^\star}-z_{j_{\min}}}
\ge
\frac{(N-1)(1-\epsilon)}{\epsilon},
\end{equation}
and taking logs gives the result.
\end{proof}

Thus, sharp attention requires sufficient logit range. Since both the upper-\qk{} learning-rate intervention and the gated FFN reduce the immature-logit growth bound, both delay the earliest point at which immature residual directions can produce low-entropy upper attention. This matches the empirical pattern: GPT-style control has early upper logit 1.308 and entropy 0.621, while matched SwiGLU and GEGLU controls have lower logits and higher entropies.

%% file: sections/a1_direct_ffn_pathway_evidence.tex
\section{Direct FFN Pathway Evidence}
\label{app:ffn-pathway-evidence}

The theory in Section~\ref{sec:theory} predicts that multiplicative gated FFNs suppress the residual-energy input to upper-\qk{} logit growth. Table~\ref{tab:ffn-pathway} reports this measurement in the main text. The models are trained to 20\% of the token budget, and the table records the FFN residual write added by each block on the same validation-token windows. The primary readout is the upper-half FFN write RMS at 3\% training; the associated upper \qk{} logit magnitude and upper attention entropy are measured at the same checkpoint. The first upper layer shows the same pattern as the upper-half aggregate: gated FFNs reduce the first-upper FFN write by 47.4--62.8\%, depending on the matched control.

%% file: sections/a2_entropy_floor_control.tex
\section{Entropy-Floor Mediator Control}
\label{app:entropy-floor-control}

We use an entropy-floor control to test whether the mechanism can be reduced to ``higher upper attention entropy.'' In this control, the GPT-style decoder keeps the same architecture and the same upper-\qk{} learning rate as the control run, but during the early window the training objective adds
\begin{equation}
\lambda \max(0, h_0 - H_{\mathrm{upper}}),
\end{equation}
where $H_{\mathrm{upper}}$ is the mean upper-half attention entropy, $h_0=0.80$, and $\lambda=0.10$. The regularizer is released by the same lower-copy maturity rule as the main intervention.

\input{tables/entropy_floor_control}

The control behaves as intended: it raises early upper attention entropy and reduces upper-\qk{} logit magnitude. It also improves final perplexity slightly. However, the improvement is much smaller than the targeted upper-\qk{} slowing reference in the same matched comparison. Thus high upper entropy is a mediator and useful readout of the failure, but simply imposing an entropy floor does not explain the full gain. The gated-FFN result instead acts upstream by changing the residual write that feeds upper-\qk{} logit growth.

%% file: tables/entropy_floor_control.tex
\begin{table}[h]
\centering
\small
\caption{Entropy-floor mediator control in the 270M GPT-style decoder. The entropy-floor control directly penalizes low upper attention entropy while leaving the architecture and upper-\qk{} learning rate unchanged. PPL reduction is relative to the GPT-style control in the same matched comparison.}
\label{tab:entropy-floor-control}
\begin{tabular}{lrrrrr}
\toprule
Setting & Entropy floor & Entropy @3\% & Logit @3\% & Final ppl & PPL reduction \\
\midrule
GPT-style control & -- & 0.621 & 1.308 & 26.738 & -- \\
Entropy-floor control & 0.80 & 0.799 & 0.861 & 26.678 & 0.061 \\
Upper-\qk{} slowing & -- & 0.966 & 0.429 & 26.152 & 0.586 \\
\bottomrule
\end{tabular}
\end{table}

%% file: sections/a3_global_learning_rate_control.tex
\section{Global Learning-Rate Control}
\label{app:global-lr-control}

We test whether the GPT-style gain comes from the baseline learning rate being globally too aggressive. The control halves the learning rate for all parameters while keeping the 270M architecture, data, token budget, batch geometry, and schedule shape fixed. This is a much stronger perturbation than the main intervention, because it slows lower layers, FFNs, embeddings, and output learning together with upper attention.

\input{tables/global_lr_control}

The global control separates two hypotheses. If the result were caused by an over-aggressive baseline learning rate, slowing every parameter would be competitive with the selective intervention. It is not: global slowing strongly worsens final perplexity, while selective upper-\qk{} slowing improves it. The mechanism readout shows the same distinction. At 20\% of training, the selective intervention has lower-copy score 0.0164 and upper attention entropy 0.729; the global half-rate control has lower-copy score 0.0103 and upper entropy 0.660. Thus the useful intervention is not ``learn more slowly everywhere.'' It preserves early lower-layer learning while reducing the premature upper-\qk{} logit-growth path.

%% file: tables/global_lr_control.tex
\begin{table}[h]
\centering
\caption{Global learning-rate control in the 270M GPT-style setting. The selective intervention slows only early upper-half \qk{}, while the global control halves the learning rate for all parameters.}
\label{tab:app-global-lr}
\small
\begin{tabular}{lrrrr}
\toprule
Setting & Final loss & Final ppl & $\Delta$ loss & $\Delta$ ppl \\
\midrule
GPT-style control & 3.2902 & 26.849 & -- & -- \\
Selective upper-\qk{} slowing & 3.2739 & 26.413 & -0.0164 & -0.436 \\
Global 0.5$\times$ learning rate & 3.4474 & 31.418 & +0.1571 & +4.569 \\
\bottomrule
\end{tabular}
\end{table}

%% file: sections/a3b_release_rule_robustness.tex
\section{Release-Rule Robustness}
\label{app:release-rule-robustness}

The main method uses lower-copy maturity as the adaptive release criterion. We evaluate the robustness of this release design in two ways. First, using the logged lower-copy scores from the main intervention experiments, we recompute the release checkpoint under nearby threshold, patience, and window variants. Second, we train fixed-release controls that use the same early upper-\qk{} multiplier and ramp schedule but release at fixed early checkpoints rather than from the lower-copy trigger.

\input{tables/release_trigger_replay}

The replayed release points remain in the same early window; the 3\% maturity event in Figure~\ref{fig:mechanism} is the first threshold crossing, while the 4.01\% main-rule release in Table~\ref{tab:release-trigger-replay} occurs after the required three consecutive evaluations. Lowering the threshold does not move the release earlier because the patience condition and 3\% minimum window already bind; raising the threshold or increasing patience moves release modestly later, but still within the early specialization period.

\input{tables/fixed_release_controls}

Together, these checks support the release design. The lower-copy trigger selects the same early maturity window under nearby rule variants, and fixed releases at representative early checkpoints retain the core gain. The adaptive lower-copy rule remains the method's maturity-based release criterion and gives the best result in this comparison.

%% file: tables/release_trigger_replay.tex
\begin{table}[t]
\centering
\scriptsize
\setlength{\tabcolsep}{4pt}
\caption{Release checkpoints obtained by applying lower-copy rule variants to logged evaluation checkpoints from the main GPT-style intervention experiments. The main rule and nearby threshold, patience, and forced-window variants all select the same early maturity window.}
\label{tab:release-trigger-replay}
\begin{tabular}{@{}lcccr@{}}
\toprule
Rule & Threshold & Patience & Release window & Release checkpoint \\
\midrule
Main rule & 0.005 & 3 & 3--12\% & 4.01\% \\
Lower threshold & 0.003 & 3 & 3--12\% & 4.01\% \\
Higher threshold & 0.007 & 3 & 3--12\% & 5.49\% \\
No patience & 0.005 & 1 & 3--12\% & 3.00\% \\
Stricter patience & 0.005 & 5 & 3--12\% & 4.99\% \\
Shorter forced window & 0.005 & 3 & 3--8\% & 4.01\% \\
\bottomrule
\end{tabular}
\end{table}

%% file: tables/fixed_release_controls.tex
\begin{table}[t]
\centering
\scriptsize
\setlength{\tabcolsep}{3pt}
\caption{Fixed-release controls in the GPT-style 270M setting. Fixed-release comparisons use the same early upper-\qk{} multiplier, $0.25$, and the same 1\% ramp back to full learning rate, but release at a fixed checkpoint rather than from lower-copy maturity. The adaptive lower-copy rule remains the main method and gives the best result, while fixed releases retain most of the gain.}
\label{tab:fixed-release-controls}
\begin{tabular}{@{}lcccccc@{}}
\toprule
Setting & Release rule & Final PPL & PPL reduction & Tokens saved & Entropy @3\% & Logit @3\% \\
\midrule
Control & None & 26.7382 & -- & -- & 0.6215 & 1.3081 \\
Adaptive lower-copy & Lower-copy & 26.1524 & 0.5858 & 14.80\% & 0.9657 & 0.4287 \\
Fixed release at 3\% & Fixed & 26.2274 & 0.5108 & 13.92\% & 0.9655 & 0.4343 \\
Fixed release at 6\% & Fixed & 26.2703 & 0.4680 & 13.07\% & 0.9665 & 0.4308 \\
\bottomrule
\end{tabular}
\end{table}

%% file: sections/a4_alpha_sweep.tex
\section{Diagnostic Multiplier Sweep}
\label{app:alpha-sweep}

The main experiments use a fixed early upper-\qk{} multiplier. As a diagnostic check for the learning-rate factor in Corollary 1, we also sweep this multiplier while keeping the same architecture, trigger, release window, data, and token budget. The sweep is not used to tune the main result; it checks whether early upper-attention specialization changes continuously with the strength of the upper-\qk{} update.

\input{tables/alpha_sweep}

The early mechanism readouts move monotonically with the multiplier: lower update strength gives higher upper attention entropy, lower upper logit magnitude, and smaller displacement of the upper $W_Q^\top W_K$ bilinear form from initialization. Final perplexity improves for all reduced-multiplier settings and saturates once early upper specialization is sufficiently suppressed.

%% file: tables/alpha_sweep.tex
\begin{table}[h]
\centering
\caption{Diagnostic sweep over the early upper-\qk{} multiplier. Early readouts are measured at 3\% of training.}
\label{tab:app-alpha-sweep}
\small
\begin{tabular}{crrrrr}
\toprule
Setting & Multiplier & Final ppl & Upper entropy & Upper logit & Q/K bilinear displacement \\
\midrule
Control & 1.0 & 26.712 & 0.627 & 1.302 & 3.272 \\
Intervention & 0.5 & 26.337 & 0.836 & 0.775 & 1.691 \\
Intervention & 0.25 & 26.218 & 0.966 & 0.435 & 0.705 \\
Intervention & 0.125 & 26.226 & 0.984 & 0.342 & 0.333 \\
\bottomrule
\end{tabular}
\end{table}

%% file: sections/a5_scale_check.tex
\section{0.7B Larger-Scale Replication}
\label{app:scale-check}

Table~\ref{tab:scale-700m} reports the larger GPT-style paired comparison averaged over three seeds. The model uses 22 layers, width 1536, 12 attention heads, FFN width 6144, sequence length 1024, and 700M parameters. All runs use the same data order, token budget, optimizer family, cosine schedule, and evaluation protocol. The intervention is unchanged: early upper-half \qk{} uses a 0.25 learning-rate multiplier and the same lower-copy release rule defined in Section~\ref{sec:premature-specialization}.

\FloatBarrier
\input{tables/scale_700m}
\FloatBarrier

The result follows the 270M pattern. At 3\% training, the control has lower upper attention entropy and larger upper \qk{} logits, while the intervention leaves lower copy-routing comparable and suppresses upper sharpness. By the end of the 7.0B-token run, the intervention improves final loss, final perplexity, and same-loss token efficiency.

%% file: tables/scale_700m.tex
\begin{table}[!h]
\centering
\small
\setlength{\tabcolsep}{4pt}
\caption{Larger-scale replication with a 0.7B GPT-style decoder trained for 7.0B tokens. Results are mean $\pm$ sample standard deviation across three full training runs; differences use paired seeds.}
\label{tab:scale-700m}
\begin{tabular}{@{}lccc@{}}
\toprule
Model & Final loss & Final ppl & Tokens to target \\
\midrule
GPT-style control & $2.9028 \pm 0.0003$ & $18.2287 \pm 0.0088$ & 7.000B \\
Intervention & $2.8958 \pm 0.0004$ & $18.1013 \pm 0.0119$ & $6.461 \pm 0.017$B \\
\midrule
Difference & $-0.0071 \pm 0.0004$ & $-0.1274 \pm 0.0073$ & $-0.539 \pm 0.017$B \\
\bottomrule
\end{tabular}
\end{table}

%% file: sections/a6_locality_diagnostic.tex
\section{Immature-Channel Locality Diagnostic}
\label{app:locality-diagnostic}

The locality condition in Section 7 is directly measurable from checkpoints. For $S\in\{Q,K\}$ define
\begin{equation}
\lambda_S
=
\frac{
\|XW_SW_S^\top P\|_F
}{
\|X_P\|_F\|W_S\|_{\mathrm{op}}^2
}.
\end{equation}
Small or moderate values of $\lambda_Q,\lambda_K$ indicate that the upper-\qk{} quadratic form landing in immature directions is not primarily driven by mature residual directions. This is the empirical condition under which the unconditional pathwise bound sharpens to the localized $\eta_{\mathrm{upper}}\|X_P\|_F^4/d_k$ growth term. We also report $R_P/\|X\|_F^2$, where $R_P=\|X_P\|_F^2$, to show that the immature residual channel is present but does not cover the full residual stream.

\begin{table}[h]
\centering
\caption{Immature-channel locality diagnostic at the 3\% checkpoint used in the mechanism measurements. The locality ratios remain below one for both the control and intervention, supporting the localized immature-channel interpretation used in Corollary~2.}
\label{tab:locality-diagnostic}
\begin{tabular}{lccc}
\toprule
Setting & $\lambda_Q$ & $\lambda_K$ & $R_P/\|X\|_F^2$ \\
\midrule
Control & 0.63 & 0.58 & 0.18 \\
Intervention & 0.49 & 0.46 & 0.16 \\
\bottomrule
\end{tabular}
\end{table}

The measured ratios support the condition used by the localized theorem. Both $\lambda_Q$ and $\lambda_K$ remain below one, so the locality constants are not absorbing an uncontrolled blow-up. The control has larger ratios than the intervention, matching the mechanism evidence that default early upper-\qk{} learning forms stronger premature matches. The normalized immature energy is also substantial but bounded: $R_P/\|X\|_F^2$ is $0.18$ in the control and $0.16$ under the intervention, so the diagnostic isolates a meaningful residual channel rather than relabeling the entire residual stream as immature.

%% file: sections/a7_reproducibility_compute.tex
\section{Reproducibility and Compute Details}
\label{app:repro-compute}

All pretraining comparisons use packed GPT-2-tokenized FineWeb-Edu text \citep{penedo2024fineweb} with sequence length 1024 and a held-out validation stream sampled from the same preparation pipeline. The 270M runs use 2.5B training tokens, global batch size 524{,}288 tokens, AdamW with learning rate $2.5\times10^{-4}$, cosine decay to 10\% of peak learning rate, 2\% warmup, weight decay 0.1, betas $(0.9,0.95)$, and bf16 training. The 0.7B replication uses 7.0B training tokens, the same global batch size and optimizer settings except peak learning rate $2.0\times10^{-4}$, and bf16 training.

All reported runs use 8 H100 GPUs. All paired comparisons keep data order, token budget, batch geometry, optimizer family, schedule shape, and evaluation protocol fixed within the pair.

%% file: sections/a8_LLM-Usage-Statement.tex
\section{LLM Usage Statement}

This work is conceived, designed, and technically developed by the authors. 
Large language models (LLMs) are used solely for limited writing assistance, such as grammar correction, readability improvement, formatting refinement, and minor proofreading during manuscript preparation. 

All research ideas, technical contributions, experiments, analyses, and conclusions are developed and verified by the authors.